\documentclass{article}

 \usepackage[preprint]{neurips_2026}
 \usepackage{caption}

\usepackage[utf8]{inputenc} 
\usepackage[T1]{fontenc}    
\usepackage{hyperref}       
\usepackage{url}            
\usepackage{booktabs}       
\usepackage{amsfonts}       
\usepackage{nicefrac}       
\usepackage{microtype}      

\usepackage[colorinlistoftodos,prependcaption,textsize=tiny]{todonotes}
\PassOptionsToPackage{numbers,sort&compress}{natbib}
\usepackage{amsmath}
\usepackage{algorithm}
\usepackage{algorithmic}
\usepackage{wrapfig}
\usepackage{tcolorbox}
\usepackage{graphicx}
\usepackage{wrapfig}
\usepackage{mdframed}
\usepackage{amsthm}
\usepackage{amsfonts}

\newtheorem{theorem}{Theorem}

\newtheorem{lemma}{Lemma}

\usepackage{siunitx}
\usepackage{makecell}
\usepackage{fancyvrb}
\usepackage{multirow}
\usepackage{booktabs}
\usepackage[table,dvipsnames]{xcolor} 
\usepackage{hyperref}
\usepackage{enumitem}

\hypersetup{colorlinks=true,linkcolor=MidnightBlue,anchorcolor=black,citecolor=MidnightBlue,urlcolor=MidnightBlue}

\title{Learn Where Outcomes Diverge: Efficient VLA RL via Probabilistic Chunk Masking}

\author{Vaidehi Bagaria$^{*}$, Nikshep Grampurohit$^{*}$, Pulkit Verma$^{\dagger}$ \\ Indian Institute of Technology Madras}

\begin{document}

\maketitle

\renewcommand{\thefootnote}{\fnsymbol{footnote}}
\footnotetext[1]{Equal contribution.}
\footnotetext[2]{Correspondence to \texttt{pulkitv@cse.iitm.ac.in}.}
\renewcommand{\thefootnote}{\arabic{footnote}}

\begin{abstract}
Reinforcement learning (RL) allows vision-language-action (VLA) policies to generalize beyond their training distribution by optimizing directly for task success, but this post-training stage is computationally expensive. A natural response has been to speed up rollout data collection through faster simulators and world models. In
GRPO-based VLA RL, however, we find that the dominant cost lies elsewhere:
gradient computation accounts for $\sim$78\% of wall-clock time per
step in our runs, while rollout collection accounts for only
$\sim$21\%. Gradient cost dominates because much of this computation is spent on phases that contribute little to learning. GRPO's learning signal is driven by advantage variance: only phases where successful and failed rollouts diverge produce useful learning signal. However, GRPO assigns the same advantage to every chunk in a rollout. As a result,
actor-update compute is spent uniformly across the trajectory, including phases the policy already handles after pre-training and supervised fine-tuning. This paper presents Probabilistic Chunk Masking (PCM), a drop-in
modification to GRPO that allocates gradient computation to a small,
probabilistically selected subset of chunks per trajectory. PCM scores
semantic phases using success-failure action variance, a rollout-derived
proxy for per-phase gradient variance, and samples a fixed chunk budget
with online-updated phase-level keep probabilities. We formalize per-phase gradient variance as the quantity that determines where gradient computation is useful and show that success-failure action
variance provides a measurable proxy for it.
PCM requires no reward model or learned critic. On three LIBERO benchmarks, PCM matches
the final success rate of standard GRPO while achieving $2.38{\times}$
wall-clock speedup, $4.8{\times}$ faster gradient updates, and 60$\%$
lower peak activation memory, while backpropagating through fewer than 20$\%$ of trajectory
chunks.

\end{abstract}

\section{Introduction}

Reinforcement learning (RL) has become essential for vision-language-action (VLA) models, enabling generalization to new environments and behaviors~\citep{black2026pi0visionlanguageactionflowmodel, kim2024openvlaopensourcevisionlanguageactionmodel}. The field has increasingly converged on Group Relative Policy Optimization (GRPO)~\citep{shao2024deepseekmathpushinglimitsmathematical} whose critic-free formulation removes the cost and complexity of training a separate value model~\citep{li2025simplevlarlscalingvlatraining, lu2025vlarlmasterfulgeneralrobotic}. As these systems scale, speeding up post-training has become critical. Efforts to speed up GRPO-based VLA RL have implicitly assumed that rollout collection is the bottleneck. We measure where time actually goes and find the reverse: in our runs gradient computation accounts for $\sim$78\% of wall-clock time per 
training step, while rollout collection accounts for only $\sim$21\% 
(Fig.~\ref{fig:phase_time}, right). 


This inefficiency arises from a deeper structural feature shared by most RL algorithms. Backpropagation runs over every chunk in the trajectory, and trajectory-level credit assigns the same advantage to each chunk regardless of which determined the outcome. Both inefficiencies share a common cause: the trajectory has been treated as the unit of gradient computation. Even methods that differentiate per-step credit, such as PPO~\citep{schulman2017proximalpolicyoptimizationalgorithms} with a learned critic, still apply forward/backward-pass compute to all sampled timesteps. They modify the signal, not the compute allocation. 

This observation suggests a missing axis in VLA RL optimization: \textit{within a trajectory, which chunks should receive gradient computation at all?}

The answer is not all of them. GRPO’s learning signal is driven by differences in advantage across rollouts: when rollout rewards are uniform, gradients vanish 
\citep{wang2026treestylebranchingmattersthought, 
xu2025singlestreampolicyoptimization}. We argue that variance non-uniformity persists even within a single non-collapsed group. Supervised fine-tuning is effective in densely covered, unimodal regions,
but weak under sparse demonstrations or multimodal optimal behavior~\citep{chu2025sftmemorizesrlgeneralizes}; these
are precisely the regimes that RL fine-tuning is intended to refine. 
This suggests that (a) gradient computation should be treated as an allocatable resource and (b) recomputing gradients on phases the base policy already handles wastes
compute and memory. From this perspective, RL training becomes a problem of allocating limited gradient budget to the parts of trajectories that carry useful learning signal.

We present Probabilistic Chunk Masking (PCM), a drop-in
modification to GRPO that updates only a fixed budget of chunks per
trajectory. PCM assigns phase-level keep probabilities using the
success-failure action variance $C_c$, which is concentrated in a small
number of outcome-critical semantic phases $c$ (Fig.~\ref{fig:phase_time}, left). It then physically removes the unselected chunks from the backward pass, significantly reducing
actor-update compute and activation memory. We further prove that $C_c$ serves as a measurable lower-bound proxy for the true per-phase gradient variance $V_c$, giving a principled
basis for this allocation. The 
\textbf{main contributions} of this paper are as follows:

\begin{figure}[t]
\centering

\begin{minipage}{0.56\linewidth}
\centering
\includegraphics[width=\linewidth]{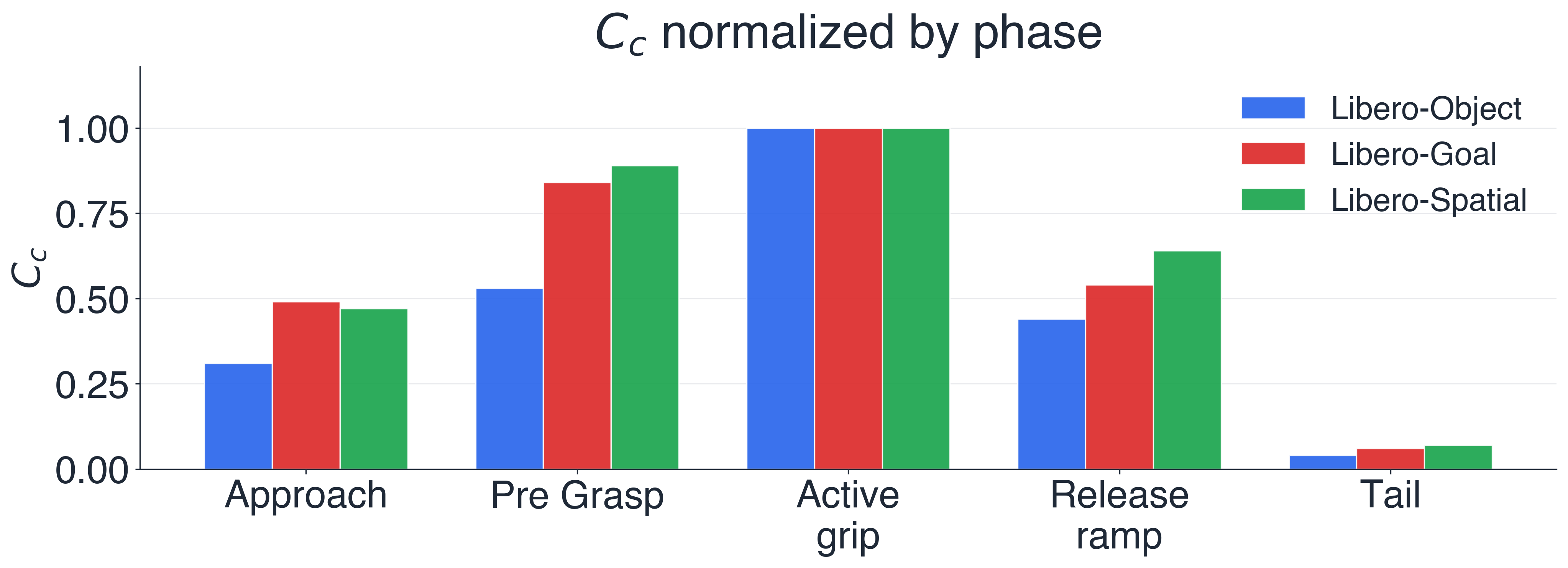}
\end{minipage}
\hfill
\begin{minipage}{0.4\linewidth}
\centering
\rowcolors{2}{gray!13}{white}
\small
\setlength{\tabcolsep}{2pt}
\begin{tabular}{lcc}
\toprule
\textbf{Benchmark} & \textbf{Grad} \textbf{(\%)} & \textbf{Rollout} \textbf{(\%)} \\
\midrule
LIBERO-Object  & 77.91 & 21.27 \\
LIBERO-Goal    & 77.86 & 21.29 \\
LIBERO-Spatial & 77.85 & 21.29 \\
\bottomrule
\end{tabular}
\end{minipage}
\caption{Left: per-phase success--failure action variance ($C_c$)
concentrates on decision-critical phases. Right: gradient computation
dominates wall-clock training cost in simulator-based GRPO with
10 rollouts per prompt. Remaining $\sim$1\% is logging, evaluation, and inter-step overhead.}
\label{fig:phase_time}
\vspace{-15pt}
\end{figure}


\textbullet\ \,
\textbf{We show that $V_c$ is the right signal for gradient allocation}
and derive a Neyman allocation proportional to $N_c\sqrt{V_c}$ ($N_c$ is the number of chunks in phase $c$), which
concentrates gradient compute on outcome-divergent phases.

\textbullet\ \,
\textbf{We establish that the per-phase success–failure action variance $C_c$ is a sufficient proxy for $V_c$}, computable directly from rollouts GRPO already produces with no reward model or learned critic.

\textbullet\ \,\textbf{Empirically, these enable substantial efficiency gains in gradient-bound VLA-RL.} 
On LIBERO~\citep{liu2023liberobenchmarkingknowledgetransfer}, PCM achieves 
$2.38\times$ wall-clock speedup to 98\% convergence, $4.8\times$ faster gradient steps, 60\% 
lower peak activation memory, using less than 20\% of chunks per trajectory.

\section{Related works}

\paragraph{Reinforcement Learning Fine-Tuning of VLA Models.}
Reinforcement learning has become an effective post-training stage for vision-language-action (VLA) models~\citep{chu2025sftmemorizesrlgeneralizes, liu2026rlbringvlageneralization}. Early approaches relied on learned critics~\citep{hu2024flareachievingmasterfuladaptive, guo2025improvingvisionlanguageactionmodelonline, chen2025conrftreinforcedfinetuningmethod, tan2025interactiveposttrainingvisionlanguageactionmodels}, but the field has converged on critic-free GRPO-style optimization as these systems have scaled ~\citep{li2025simplevlarlscalingvlatraining,lu2025vlarlmasterfulgeneralrobotic,chen2025tgrpofinetuningvisionlanguageactionmodel,ye2025vlar1enhancingreasoningvisionlanguageaction}.

To support this shift, advances in world models and simulation have significantly reduced rollout cost ~\citep{zhu2025wmpoworldmodelbasedpolicy, liu2026worldvlaloopclosedlooplearningvideo, zang2026rlinfvlaunifiedefficientframework} under the implicit assumption that data collection is the bottleneck. However, this progress has left gradient computation, which we observe accounts for $\sim$78\% of training time, largely unoptimized.
Other work improves the source or structure of the reward signal~\citep{li2025vlarftvisionlanguageactionreinforcementfinetuning,zhai2025visionlanguageactioncriticmodelroboticrealworld,tan2025interactiveposttrainingvisionlanguageactionmodels} by altering the reward, critic, rollout source, or advantage/credit-assignment rule. We show that this additional machinery can be avoided: $C_c$ (success-failure action variance within a phase), an internal signal already present in GRPO rollouts, directly captures where learning signal concentrates and is sufficient to guide gradient allocation. 

\paragraph{Efficiency Methods for GRPO.}
GRPO’s learning signal is driven by advantage variance~\citep{nan2025ngrponegativeenhancedgrouprelative}, and prior large language model (LLM) RL work has exploited this at two granularities. At the prompt level, methods filter or reweight zero-variance rollout groups~\citep{yu2025dapoopensourcellmreinforcement,zheng2025actpaysefficientreinforcement,hu2025vadevarianceawaredynamicsampling}. At the token level, work selects which sequence positions receive gradient using policy-internal signals such as token entropy or probability shifts ~\citep{wang20258020rulehighentropyminority,khandoga2026uniformcreditcausalcredit, hu2026entropygatedselectivepolicyoptimizationtokenlevel}. PCM operates at a third granularity: phase-level allocation within a trajectory. Entropy and probability shifts are indirect proxies that measure where the model is uncertain, not where successful and failed rollouts actually diverge. In settings with verifiable outcomes (like VLA RL), this raises a more fundamental question, \textit{why estimate outcome-criticality indirectly when the outcome signal is already available?} We use $C_c$, computed directly from rollout outcomes, to allocate gradient compute according to observed success–failure divergence. Unlike prior work that relies on proxy signals such as entropy or probability shifts, PCM exploits outcome-grounded divergence.

Parameter-efficient methods like LoRA and its extensions are
standard in LLM and VLA fine-tuning, reducing trainable parameters but
not the activation cost of backpropagating through long trajectories
\citep{hu2021loralowrankadaptationlarge,
aghajanyan2020intrinsicdimensionalityexplainseffectiveness,
dettmers2023qloraefficientfinetuningquantized,
zhang2025trainsmallinferlarge,
zhang2023adaloraadaptivebudgetallocation}. Activation-focused methods reduce this memory cost through checkpointing and streaming backpropagation ~\citep{chen2016training, korthikanti2022reducing, luo2025streambpmemoryefficientexactbackpropagation}. Crucially, prior methods do not address the inefficiency of computing gradients on outcome-invariant phases within a trajectory, which still incur computation cost despite contributing little to learning.

\section{Preliminaries}
\label{sec:prelim}
We briefly describe the setup for chunk-level GRPO, which our method builds on.
 
\paragraph{VLA Policies and Action Chunks.}
Let $\pi_\theta$ denote a VLA policy that produces actions in chunks of length
$L$ conditioned on observation $s_{i,k}$, where $i$ indexes the trajectory and
$k$ the chunk within that trajectory.
A rollout of trajectory $i$ decomposes into $N_i = \lceil T_i / L \rceil$
chunks, where $T_i$ is the total number of timesteps.

\paragraph{Group Relative Policy Optimization.}
GRPO~\citep{shao2024deepseekmathpushinglimitsmathematical} samples a group of $G$ rollouts from the
current policy, scores each with a binary reward $r_i \in \{0, 1\}$, and forms
the group-relative advantage
\begin{equation}
  A_i = \frac{r_i - \mu_r}{\sigma_r + \varepsilon},
  \qquad
  \mu_r = \frac{1}{G}\sum_j r_j,
  \quad
  \sigma_r^2 = \frac{1}{G}\sum_j (r_j - \mu_r)^2.
  \label{eq:advantage}
\end{equation}
Writing $a_{i,k}$ for the action tokens of chunk $k$ in trajectory $i$, the
chunk-level GRPO objective is
\begin{equation}
  \mathcal{L}_{\mathrm{GRPO}}(\theta)
  = -\mathbb{E}_i\!\left[
      \sum_{k=1}^{N_i} A_i \cdot \log\pi_\theta(a_{i,k} \mid s_{i,k})
    \right],
  \label{eq:grpo}
\end{equation}
with PPO-style clipping omitted for clarity. 

\section{Methodology: Probabilistic Chunk Masking}
\label{sec:method}

\begin{figure}
    \centering
    \includegraphics[width=0.9\linewidth]{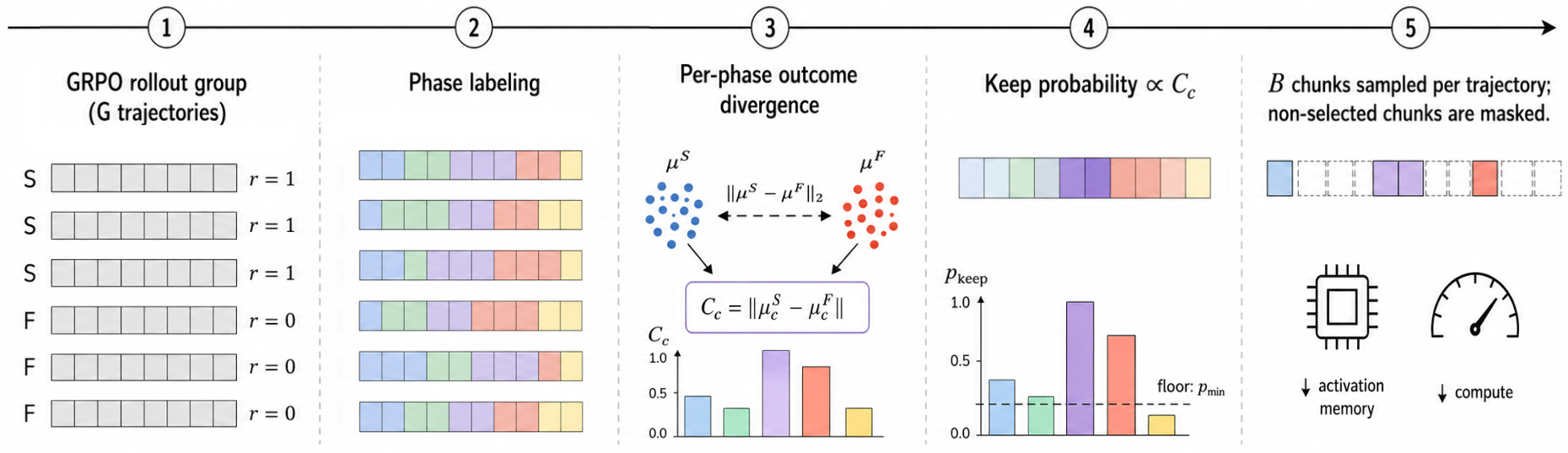}
    \caption{Probabilistic Chunk Masking (PCM). We compute per-phase success–failure action variance ($C_c$) from GRPO rollouts, use it to prioritize decision-critical phases, and sample a fixed budget of chunks per trajectory. Gradients are computed only on selected chunks.
}
    \label{fig:methodology}
\end{figure}


\subsection{Phase-Conditioned Gradient Variance}
\label{sec:variance}
 
Following the VLA-RL setup formalized in Appendix~\ref{app:problem}, we partition every trajectory into $K$ semantic phases using a deterministic
labeling rule (described in Sec.~5.1).
Let $\mathcal{P} = \{c_1, \ldots, c_K\}$ denote this phase set, and let
$\varphi(i,k) \in \mathcal{P}$ denote the phase of chunk $(i,k)$. 
 
\paragraph{Phase decomposition of the GRPO gradient.}
The gradient of $\mathcal{L}_{\mathrm{GRPO}}$ decomposes by phase:
\begin{equation}
  \nabla_\theta \mathcal{L}_{\mathrm{GRPO}}
  = \sum_{c \in \mathcal{P}} g_c(\theta),
  \qquad
  g_c(\theta)
  = -\mathbb{E}_i\!\left[
      A_i \sum_{\substack{k \,:\, \varphi(i,k)=c}} \nabla_\theta \log\pi_\theta(a_{i,k} \mid s_{i,k})
    \right].
  \label{eq:decomp}
\end{equation}
The learning signal available in phase $c$ is characterized by the
\emph{per-phase gradient variance}
\begin{equation}
  V_c = \mathrm{Var}\!\left(
    A_i \cdot \nabla_\theta \log\pi_\theta(a_{i,k} \mid s_{i,k})
    \;\middle|\;
    \varphi(i,k) = c
  \right).
  \label{eq:Vc}
\end{equation}
Intuitively, $V_c$ is large for phases where the policy behaves differently
across successful and failed rollouts.
Throughout, $V_c$ refers to the within-trajectory chunk variance ($A_i$ fixed); under advantage normalization (Eq.~\ref{eq:advantage}), this matches the unconditional variance in Eq. (4) up to an $O(1)$ constant.
We formalize this observation as follows. Note that formal proofs appear in Appendix~\ref{sec:formal_proofs}.
 
\begin{lemma}[Phase gradient variance]
\label{lem:variance}
Let $c \in \mathcal{P}$ be a phase in which $\pi_\theta$ has converged, i.e.,
the action distributions of successful and failed rollouts are identical in
phase $c$.
Then $\|g_c(\theta)\| \approx 0$ and $V_c \approx 0$, so gradient
samples from phase $c$ contribute negligible signal.
Conversely, for a phase $c$ in which the base policy is underspecified,
$V_c$ is large and gradient samples from phase $c$ are informative.
\end{lemma}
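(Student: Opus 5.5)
The plan is to reduce both claims to a single identity: condition the phase-$c$ gradient on the binary outcome $r_i$ and use that the normalized advantages in Eq.~\eqref{eq:advantage} have zero group mean. Write $p=\mu_r$ for the group success rate. A success has advantage $A^+=(1-p)/(\sigma_r+\varepsilon)$ and a failure has $A^-=-p/(\sigma_r+\varepsilon)$, so $pA^+ + (1-p)A^- = 0$. Let $s_k=\nabla_\theta\log\pi_\theta(a_{i,k}\mid s_{i,k})$ be the chunk score. Conditioning Eq.~\eqref{eq:decomp} on $r_i$ and using $N_c$ for the (expected) number of phase-$c$ chunks gives, up to $O(\varepsilon)$,
\begin{equation*}
g_c(\theta) \approx -N_c\,\sigma_r\bigl(\mathbb{E}[s_k\mid c,r=1]-\mathbb{E}[s_k\mid c,r=0]\bigr),
\end{equation*}
since $pA^+=(1-p)\cdot p/(\sigma_r+\varepsilon)$ and $p(1-p)=\sigma_r^2$. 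So $g_c$ is proportional to the success--failure difference of the mean score in phase $c$.

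For the converged direction, the hypothesis says that the distribution of $(s_{i,k},a_{i,k})$ restricted to phase $c$ is the same under $r=1$ and $r=0$. Then the two conditional means coincide and $\|g_c\|\approx 0$, where the $\approx$ absorbs the $\varepsilon$ and finite-group terms.

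For $V_c$ I would use the law of total variance over $r$. The term $\mathrm{Var}(\mathbb{E}[A s\mid r])$ vanishes for the same reason as $g_c$. The remaining term is $\mathbb{E}[A^2]\,\mathrm{Var}(s\mid c)\approx \mathrm{Var}(s\mid c)$, using $\mathbb{E}[A^2]\approx 1$ under normalization. So making $V_c\approx 0$ needs an extra reading of ``converged'': the policy is near-deterministic in phase $c$, so $s_k$ concentrates. I would state this as an explicit interpretation of the hypothesis. Then $\mathrm{Var}(s\mid c)\approx 0$, which also forces the means to agree. Under this reading, ``negligible signal'' means that phase-$c$ samples contribute $O(\text{small})$ to both the mean and the variance of any stochastic gradient estimator.

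For the converse, ``underspecified'' should mean that the success and failure action distributions differ in phase $c$, with mean scores $m_1\neq m_0$. The between-outcome component of the total-variance decomposition is then
\begin{equation*}
\mathrm{Var}(\mathbb{E}[As\mid r]) \;\ge\; p(1-p)\,\|\Delta\|^2/(\sigma_r+\varepsilon)^2\cdot(\text{const}),
\end{equation*}
where $\Delta = A^+m_1-A^-m_0$. This is bounded away from zero whenever $m_1\neq m_0$ and the group is non-collapsed ($0<p<1$). Together with the displayed expression for $g_c$, this shows $V_c$ is large and that the phase-$c$ samples carry the outcome-correlated direction $m_1-m_0$, which is where this lemma connects to the later $C_c$ proxy.

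The main obstacle is the informality of ``$\approx$'' and ``converged''. Identical conditional distributions kill the mean $g_c$ but do not by themselves kill $V_c$, because the within-outcome score variance survives. I would handle this by making the near-determinism assumption explicit, or by redefining the negligible quantity as the outcome-correlated (between-group) part of $V_c$, and I would note that choice clearly.
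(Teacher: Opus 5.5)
\textbf{Verdict: genuine gap in the converse.} Your formula $g_c\approx -N_c\sigma_r(m_1-m_0)$ is the paper's cancellation argument made precise. It tacitly assumes that phase-$c$ occupancy does not depend on the outcome. Your diagnosis that identical outcome-conditional distributions do not force $V_c\approx 0$ is also correct. It is exactly the unjustified step in the paper's proof, which asserts that $V_c=0$ \emph{if and only if} the conditional action distributions coincide, and never accounts for the within-outcome term $\mathbb{E}[A^2]\,\mathrm{Var}(s\mid c)$ that you isolate.

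The gap is in your converse. The between-outcome term is $p(1-p)\|A^+m_1-A^-m_0\|^2$, and
$A^+m_1-A^-m_0=\bigl((1-p)m_1+p\,m_0\bigr)/(\sigma_r+\varepsilon)$.
This is a weighted \emph{sum}, not the difference $m_1-m_0$: multiplying by the advantage flips the sign of the failure group and aligns the two groups rather than separating them. Your display also divides by $(\sigma_r+\varepsilon)^2$ a second time. So the claim that this term is bounded away from zero whenever $m_1\neq m_0$ is false. At $p=1/2$ it equals, up to $O(\varepsilon)$, $\|\mathbb{E}[s\mid c]\|^2$. That vanishes whenever the phase label does not depend on the sampled action, because scores have mean zero.

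A concrete counterexample: take one binary action with $\pi(a{=}1)=q$, logit score $s=a-q$, and success iff $a=1$. Then $As=(a-q)^2/\sqrt{q(1-q)}$, so $V_c=(1-2q)^2$ while $\|g_c\|=\sqrt{q(1-q)}$. At $q=1/2$ the phase is maximally outcome-critical: $m_1-m_0=1$ and $\|g_c\|$ is maximal. Yet $As\equiv 1/2$ and $V_c=0$ exactly. This also refutes the ``only if'' half of the paper's claimed equivalence.

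Hence ``$V_c$ is large'' cannot be derived from $m_1\neq m_0$ by any argument. The paper does not derive it either; its proof only asserts that a divergent phase ``carries useful learning signal.'' The defensible converse is the one your own formula already gives, $\|g_c\|\approx N_c\sigma_r\|m_1-m_0\|>0$. You should state the second half of the lemma in terms of $g_c$ rather than $V_c$.

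Separately, tighten your repair of the forward direction. ``Near-deterministic, so $s_k$ concentrates'' is true for a softmax over action tokens, where the score $J^\top(e_a-\pi)$ tends to $0$ ($J$ the logit Jacobian, $e_a$ the one-hot action). It fails for the Gaussian policy used in Lemma~2: there the score variance is of order $\|\nabla_\theta\mu_\theta\|^2/\sigma_\pi^2$, which grows as the policy sharpens. State the extra hypothesis directly as $\mathrm{Var}(s\mid c)\approx 0$, or restrict explicitly to categorical action tokens.
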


 
\paragraph{Empirical proxy for $\mathbf{V_c}$.}
Computing $V_c$ exactly requires access to the gradient distribution, which is
expensive.
We instead use the \emph{success-failure action variance}
\begin{equation}
  C_c
  = \bigl\|
      \mathbb{E}[a_{i,k} \mid r_i = 1,\, \varphi(i,k) = c]
      -
      \mathbb{E}[a_{i,k} \mid r_i = 0,\, \varphi(i,k) = c]
    \bigr\|,
  \label{eq:Cc}
\end{equation}
computable directly from the rollout group that GRPO already produces, with no
auxiliary model or annotation. We compute $C_c$ only for rollout groups with non-zero reward variance.
 
\begin{lemma}[$C_c$ as a proxy for $V_c$]
\label{lem:proxy}
For a policy $\pi_\theta$ that is locally Gaussian with per-dimension variance
$\sigma_\pi^2$, the per-phase gradient variance satisfies
  $V_c \;\geq\; {C_c^2}/{4\sigma_\pi^2}$.
\end{lemma}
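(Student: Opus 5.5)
Under the local Gaussian assumption I would write $\pi_\theta(a\mid s)=\mathcal{N}(\mu_\theta(s),\sigma_\pi^2 I)$ on a phase $c$, so the score with respect to the mean is
\[
\nabla_\mu \log\pi_\theta(a\mid s)=(a-\mu_\theta(s))/\sigma_\pi^2 .
\]
The lemma is read as a statement about this mean-parameter component of the score; for a general parameterization the chain rule gives a Jacobian factor, and I would state the bound for the mean coordinates, or assume an identity-like Jacobian. Two facts then drive the argument: the variance of a random vector is at least the variance of any unit-norm projection of it, and conditioning can only decrease variance (law of total variance). The plan is to project onto the success--failure direction $u=(m_1-m_0)/\|m_1-m_0\|$, where $m_r=\mathbb{E}[a_{i,k}\mid r_i=r,\varphi(i,k)=c]$, so that $\|m_1-m_0\|=C_c$.

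First I decompose $V_c$ via the law of total variance over the outcome label $r_i$:
\[
V_c\ge \mathrm{Var}\bigl(\mathbb{E}[A_i\,u^\top\nabla\log\pi \mid r_i]\bigr).
\]
Within a non-degenerate group, $A_i$ takes the two values $A^+=\sqrt{(1-p)/p}$ and $A^-=-\sqrt{p/(1-p)}$, where $p=\mu_r$ (ignoring $\varepsilon$). The conditional means are $A^{\pm}\,u^\top(m_{1/0}-\bar\mu)/\sigma_\pi^2$, where $\bar\mu$ is the (phase-averaged) policy mean.

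Next I evaluate this two-point variance. A two-point random variable taking $x_1$ with probability $p$ and $x_0$ with probability $1-p$ has variance $p(1-p)(x_1-x_0)^2$. Writing $\bar\mu=p\,m_1+(1-p)\,m_0$, consistent with the policy mean being the mixture of the conditional means, gives
\[
m_1-\bar\mu=(1-p)\,\Delta,\qquad m_0-\bar\mu=-p\,\Delta,\qquad \Delta=m_1-m_0 .
\]
Projecting onto $u$ gives $u^\top\Delta=C_c$, and the difference of the two conditional values is
\[
x_1-x_0=\frac{C_c}{\sigma_\pi^2}\Bigl((1-p)A^+ + pA^-\cdot(-1)\Bigr).
\]
Substituting the normalized advantages yields $\sqrt{p(1-p)}\,C_c/\sigma_\pi^2$ times an $O(1)$ factor. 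Multiplying by $p(1-p)$ gives something of order $p^2(1-p)^2 C_c^2/\sigma_\pi^4$-ish, and the constant $1/4$ should come from bounding $p(1-p)\le 1/4$, or from the worst case over $p$.

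The main obstacle is the units and constants. The naive projection yields $C_c^2/\sigma_\pi^4$ times a $p$-dependent factor, not $C_c^2/(4\sigma_\pi^2)$. To obtain the stated form, I would measure the gradient in the natural (Fisher-whitened) metric, i.e.\ use the score $(a-\mu)/\sigma_\pi$, or equivalently take the gradient with respect to the standardized mean $\mu/\sigma_\pi$. That converts $\sigma_\pi^{-4}$ into $\sigma_\pi^{-2}$. I would then fix the $p$-dependence by using the within-trajectory convention stated before Lemma~\ref{lem:variance}: $A_i$ is fixed and $|A_i|$ is bounded below by normalization, so $A_i^2\ge 1$ in the balanced case. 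The Jensen-type step $\mathbb{E}[\|a-\mu\|^2]\ge$ (between-group spread) together with the symmetric split $p=1/2$, where $\|m_r-\bar\mu\|=C_c/2$, gives exactly
\[
(C_c/2)^2/\sigma_\pi^2=C_c^2/(4\sigma_\pi^2).
\]
I expect the final proof to state these normalizations explicitly, namely the balanced-group and standardized-score conventions, as the precise hypotheses under which the inequality $V_c\ge C_c^2/(4\sigma_\pi^2)$ holds.
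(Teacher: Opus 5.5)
You follow the same route as the paper's proof: the law of total variance over the outcome label $r_i$, then absorbing normalizations into the constant. The step that is supposed to produce $C_c^2$ does not do so, and your own computation shows this once a sign is fixed. Since $x_0=A^-\,u^\top(m_0-\bar\mu)/\sigma_\pi^2=-pA^-C_c/\sigma_\pi^2$, the correct difference is
\[
x_1-x_0=\frac{C_c}{\sigma_\pi^2}\bigl((1-p)A^+ + pA^-\bigr)=\frac{C_c}{\sigma_\pi^2}\cdot\frac{1-2p}{\sqrt{p(1-p)}},
\qquad
\mathrm{Var}\bigl(\mathbb{E}[X\mid r_i]\bigr)=\frac{(1-2p)^2\,C_c^2}{\sigma_\pi^4},
\]
not the expression with $(1-p)A^+-pA^-$. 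The sign of $A_i$ and the sign of $m_r-\bar\mu$ flip together. Both outcome-conditional means of $X=A_i(a-\bar\mu)/\sigma_\pi^2$ are therefore positive multiples of $\Delta$. At exactly the balanced split $p=1/2$ you intend to use, they coincide and the between-group term is zero. In whitened units this route reaches $C_c^2/(4\sigma_\pi^2)$ only for strongly imbalanced groups ($p\le 1/4$ or $p\ge 3/4$). The within-trajectory convention cannot help either: with $A_i$ fixed, there is no outcome variation left to condition on.

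Your fallback ``Jensen-type step'' does not repair this. It lower-bounds the second moment $\mathbb{E}\|X\|^2$, whereas $V_c=\mathbb{E}\|X\|^2-\|\mathbb{E}X\|^2$, and the between-group mass it captures is exactly what gets subtracted. Indeed $\mathbb{E}[A_i(a-\bar\mu)]=p(1-p)(A^+-A^-)\Delta=\sqrt{p(1-p)}\,\Delta$, so for the whitened score at $p=1/2$ one has $\|\mathbb{E}X\|^2=C_c^2/(4\sigma_\pi^2)$, the very quantity you produce. In your model (common mean, on-policy noise $a-\bar\mu\sim\mathcal{N}(0,\sigma_\pi^2 I_d)$, $p=1/2$, $\varepsilon=0$, whitened score) one has $A_i^2=1$ and $\mathbb{E}\|X\|^2=d$, hence $V_c=d-C_c^2/(4\sigma_\pi^2)$ exactly. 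So the claimed inequality holds only when $C_c^2\le 2d\sigma_\pi^2$, and $V_c$ decreases as $C_c$ grows. For $d=1$ with success iff $a>\bar\mu$, you get $V_c=1-2/\pi<2/\pi=C_c^2/(4\sigma_\pi^2)$.

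No choice of normalization rescues the bound. The success--failure gap controls the mean phase gradient and the second moment, not $V_c$:
\[
\|\mathbb{E}X\|=\sqrt{p(1-p)}\,\frac{C_c}{\sigma_\pi^2}\ \text{(per chunk, mean coordinates)},
\qquad
\mathbb{E}\|X\|^2\ge\bigl(p^3+(1-p)^3\bigr)\frac{C_c^2}{\sigma_\pi^2}\ge\frac{C_c^2}{4\sigma_\pi^2}\ \text{(whitened)}.
\]
A further unjustified step is $\mathbb{E}[\mu_\theta(s_{i,k})\mid r_i=r]=\bar\mu$ for both $r$. If successful and failed rollouts reach different states in phase $c$, $C_c$ picks up that state shift, which the score $(a-\mu_\theta(s))/\sigma_\pi^2$ never sees. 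Then $C_c$ can be arbitrarily large while $g_c=0$ and $V_c$ stays fixed. The paper's proof is the same sketch and hides the same problem inside ``absorbing constants into $\sigma_\pi^2$''. Its appeal to $|A_i|\le 1/\varepsilon$ is an upper bound and cannot yield a lower bound on a variance. Making the argument correct requires changing the statement, not the normalization.
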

 

Note that the bound in Lemma~\ref{lem:proxy} is a lower bound that absorbs $\|\nabla_\theta \mu_\theta(s)\|^2$ into the constant. It justifies using $C_c$ to recover the \emph{ordering} of $V_c$ across phases, 
rather than exact correspondence. We treat $C_c$ as a practical proxy supported by empirical evidence: the realized allocation (Fig.~\ref{fig:grad_prog}) closely matches the $\sqrt{V_c}$-weighted prediction, and variance-aware ablations (Fig.~5b) show that $C_c$-weighted selection outperforms both random masking and highest-variance-only selection.


 
 
\subsection{Optimal Budget Allocation}
\label{sec:allocation}
 
We now state the core theoretical result.
Given a fixed chunk budget $B$ per trajectory and phase variance estimates
$\{V_c\}_{c \in \mathcal{P}}$, we ask: how should $B$ be distributed across
phases to minimize the variance of the GRPO gradient estimator (the sampling noise around the true gradient, distinct from the per-phase signal $V_c$)? Lower estimator variance means a more accurate estimate of the true gradient per step, which translates directly to faster stochastic gradient descent (SGD) convergence~\citep{bottou2018optimization}. This follows Neyman-allocation intuition:
under a fixed sampling budget, phases should be sampled in proportion to
their contribution to estimator variance.
 
\begin{theorem}[Optimal phase allocation]
\label{thm:main}
Let $b_c$ denote the number of chunks sampled from 
phase $c$, with $\sum_{c \in \mathcal{P}} b_c = B$, 
and let $N_c$ denote the expected number of chunks 
in phase $c$ per trajectory. The allocation minimizing 
the variance of the unbiased GRPO gradient estimator 
(Eq.~(\ref{eq:grpo}) subject to the budget constraint is
\begin{equation}
    b_c^* = B \cdot \frac{N_c\sqrt{V_c}}
    {\sum_{c' \in \mathcal{P}} N_{c'}\sqrt{V_{c'}}}.
    \label{eq:optimal-alloc-main}
\end{equation}
\end{theorem}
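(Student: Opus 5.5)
We treat this as a stratified-sampling problem and follow the classical Neyman argument. Each phase $c$ is a stratum with (expected) size $N_c$ chunks per trajectory. From it we draw $b_c$ chunks uniformly at random, and reweight each drawn chunk by the inverse inclusion probability $N_c/b_c$. The estimator is
\begin{equation*}
  \hat g = \sum_{c \in \mathcal{P}} \frac{N_c}{b_c} \sum_{k \in S_c} A_i \nabla_\theta \log\pi_\theta(a_{i,k}\mid s_{i,k}),
\end{equation*}
where $S_c$ is the sampled subset of phase-$c$ chunks in trajectory $i$. The first step is to check unbiasedness. Conditional on the trajectory, $\mathbb{E}[\hat g] = \sum_c N_c \cdot \bar h_c$, where $\bar h_c$ is the within-phase mean of $A_i\nabla_\theta\log\pi_\theta$. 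Averaging over $i$ recovers $\sum_c g_c(\theta)$, which equals $\nabla_\theta\mathcal{L}_{\mathrm{GRPO}}$ by the phase decomposition in Eq.~(\ref{eq:decomp}).

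The second step is the variance computation. Sampling is done independently across phases, and within a phase the $b_c$ draws are i.i.d. (we use sampling with replacement, or ignore the finite-population correction, since $b_c \ll N_c$). The total variance then splits as
\begin{equation*}
  \mathrm{Var}(\hat g) = \sum_{c} \frac{N_c^2}{b_c^2}\, b_c\, V_c = \sum_{c} \frac{N_c^2 V_c}{b_c},
\end{equation*}
using the definition of $V_c$ in Eq.~(\ref{eq:Vc}) as the within-phase per-chunk variance with $A_i$ fixed. The cross-phase covariance terms vanish because the selections in different phases are independent. The residual variance from the randomness of the trajectory itself does not depend on $\{b_c\}$, so it can be dropped from the optimization.

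The third step minimizes $\sum_c N_c^2 V_c / b_c$ subject to $\sum_c b_c = B$, relaxing the $b_c$ to be positive reals. The objective is convex in $b_c>0$. The Lagrangian stationarity condition gives $N_c^2 V_c / b_c^2 = \lambda$, hence $b_c \propto N_c\sqrt{V_c}$. Normalizing by the budget yields Eq.~(\ref{eq:optimal-alloc-main}). As an alternative, or as a cross-check, Cauchy--Schwarz gives directly
\begin{equation*}
  \Bigl(\sum_c b_c\Bigr)\Bigl(\sum_c \frac{N_c^2V_c}{b_c}\Bigr) \ge \Bigl(\sum_c N_c\sqrt{V_c}\Bigr)^2,
\end{equation*}
with equality exactly when $b_c \propto N_c\sqrt{V_c}$. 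This establishes global optimality. Phases with $V_c=0$ receive $b_c^*=0$, which is consistent with Lemma~\ref{lem:variance}; the convention $0/0=0$ handles this boundary case.

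The main obstacle is justifying the variance decomposition in the second step, given that trajectories have random lengths and $N_c$ is only an expectation. We will handle this by working conditionally on each trajectory's phase counts and then replacing them by their expectations $N_c$. This replacement is exact if phase lengths are roughly deterministic, and otherwise holds to first order. We will state this as a modeling assumption, along with independence of within-phase chunk gradients. We will also note that the integrality of $b_c$ is handled by rounding, or equivalently by the probabilistic keep probabilities $b_c^*/N_c$ that PCM samples, which reproduce the same expected allocation.
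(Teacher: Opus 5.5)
Your proposal is correct and follows essentially the same route as the paper: the stratified ratio estimator with the $N_c/b_c$ reweighting, the per-phase variance $N_c^2V_c/b_c$, and Cauchy--Schwarz giving $b_c\propto N_c\sqrt{V_c}$ (your Lagrangian step is a redundant cross-check). Your explicit remark that $N_c^2V_c/b_c$ ignores the finite-population correction is more careful than the paper, which states that chunks are drawn without replacement but then uses the i.i.d.\ variance formula.
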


The speedup over uniform allocation depends on how 
sharply $V_c$ concentrates across phases. When $V_c$ 
is uniform across all $K$ phases, Eq.~\eqref{eq:optimal-alloc-main} 
reduces to uniform allocation $b_c = B/K$. When $V_c$ 
concentrates on a small subset of phases, the Neyman 
allocation assigns disproportionately more budget to 
those outcome divergent phases, reducing estimator variance relative to 
uniform at the same total budget $B$. The concentration 
of $C_c$ on contact-rich phases observed in Fig.~\ref{fig:phase_time}, 
combined with Lemma~2, implies that $V_c$ is similarly 
concentrated, making this regime directly applicable 
here. Section~\ref{sec:exp} quantifies the resulting wall-clock 
speedup empirically. A formal convergence-rate 
derivation and speedup formula are provided in 
Appendix~\ref{app:allocation}.

\subsection{Online Phase Score Estimation}
\label{sec:score}
 
Theorem~\ref{thm:main} requires knowledge of $\{V_c\}$, which is not directly
observable.
By Lemma~\ref{lem:proxy}, the relative ordering of $\{V_c\}$ is captured by the rollout-computable proxy
$\{C_c\}$. 
We therefore use $C_c$ to set per-phase weights; since phase $c$ contains $N_c$
candidate chunks, 
weighted sampling consistent with 
Eq.~\eqref{eq:optimal-alloc-main} gives 
$\mathbb{E}[b_c]\propto N_c\sqrt{V_c}$, concentrating 
gradient computation on outcome-divergent phases. 
Phases with larger estimated $V_c$ receive higher 
keep probability, while
larger $N_c$ contributes more candidate chunks. Both factors increase
sampling frequency within the fixed budget $B$.

 
During training, we compute a batch-level phase score $C_c^{(t)}$ from
the rollout group at step $t$. We maintain a short buffer
$\mathcal{B}_c$ for each phase and append the current score to the short refresh buffer. With $G = 10$ rollouts and binary rewards, individual $C_c$ estimates may be noisy, especially under imbalanced success-failure splits. Therefore, scores are averaged over $T_{\mathrm{rc}}=5$ batches before
refreshing the keep probabilities.
\begin{equation}
  \mathcal{B}_c \leftarrow \mathcal{B}_c \cup \{C_c^{(t)}\}.
  \label{eq:buffer}
\end{equation}
Every $T_{\mathrm{rc}}$ steps (we use $T_{\mathrm{rc}}=5$), we collapse
the buffered scores into a share-normalized phase score:
\begin{equation}
  S_c = \sum_{C_c^{(t)} \in \mathcal{B}_c} C_c^{(t)},
  \qquad
  \rho_c = \frac{S_c}{\sum_{c'} S_{c'}},
  \qquad
  \tilde{\rho}_c =
  \frac{\rho_c}{\max_{c'} \rho_{c'}} \in [0,1].
  \label{eq:score}
\end{equation}
Share normalization is scale-invariant: it preserves the relative phase
ordering even though the absolute magnitude of $C_c$ can vary across
tasks and training stages. The resulting $\tilde{\rho}_c$ is used as the
phase-level score for probabilistic chunk selection.
For the first
update, the keep probabilities are computed from the phase scores of the
first rollout group itself; after that, scores are updated every
$T_{\mathrm{rc}}$ steps using the buffered window. This online update lets \textsc{PCM} track the realized learning signal:
if a phase learns faster, its success--failure divergence decreases and keep probability naturally falls.

\subsection{Probabilistic Mask Selection}
\label{sec:mask}
 
Given phase scores $\{\tilde{\rho}_c\}$ as proxies for $\{\sqrt{V_c}\}$, we
map them to per-phase keep-probabilities that implement the allocation of
Theorem~\ref{thm:main} in expectation:
\begin{equation}
  p_c = \max\!\left(p_{\min},\; \tilde{\rho}_c\right) \in [p_{\min}, 1],
  \label{eq:keepprob}
\end{equation}
with $p_{\min} = 0.1$.
The floor $p_{\min}$ prevents any phase from being structurally excluded
between buffer recomputes.
This is important because $\tilde{\rho}_c$ may be transiently zero for a
phase with nonzero true variance $V_c$ if that phase is underrepresented or temporarily underestimated in
the current buffer window.
Excluding such phases entirely would violate the exploration requirement
of the allocation and could cause the estimated $\{\rho_c\}$ to diverge from
the true $\{V_c\}$ ordering over time.
Each chunk inherits its phase's probability weight: $w_{i,k} = p_{\varphi(i,k)}$.
The set $\{p_c\}$ is held fixed across the next $T_{\mathrm{rc}}$ steps.

\subsection{Fixed-Budget Sampling and Physical Shrinking}
\label{sec:shrink}
Given the allocation rule derived in Theorem~\ref{thm:main}, we sample a fixed budget of B chunks per trajectory using weighted sampling without replacement ~\citep{plackett1975analysis,luce1959individual}, with weights
$\{w_{i,k}\}$:
\begin{equation}
  \mathcal{K}_i \sim \mathrm{WeightedSample}\!\left(
    w_{i,1}, \ldots, w_{i,N_i};\;
    \min(B, N_i),\;
    \text{w/o replacement}
  \right).
  \label{eq:sample}
\end{equation}

The selected set is $\mathcal{K}_i=\{k_1,\ldots,k_{m_i}\}$. Non-selected chunks are physically removed from the batch tensor before the forward pass. 

The actor update uses the masked objective:
\begin{equation}
\mathcal{L}_{\mathrm{PCM}}(\theta) = 
-\mathbb{E}_i\!\left[
    \sum_{k \in \mathcal{K}_i}
    A_i \cdot \log \pi_\theta(a_{i,k} \mid s_{i,k})
\right]
  \label{eq:pcm}
\end{equation}

The masked objective in Eq.~(\ref{eq:pcm}) omits the $1/p_c$ importance weights, making it a biased estimator of the full-trajectory GRPO gradient. The bias is $\sum_{c \in \mathcal{P}} (1 - p_c)\, g_c$: chunks that are not selected contribute zero to Eq.~(\ref{eq:pcm}) but $g_c$ to the full gradient. Under our allocation $p_c \propto \sqrt{V_c}$, the factor $(1-p_c)$ is large precisely when $V_c$ is small, and by Lemma~1, small $V_c$ implies small $\|g_c\|$. Each term in the bias is therefore suppressed by at least one small factor, so Theorem~1's allocation remains approximately optimal under the biased estimator. The trade-off is favorable: at the same chunk budget $B$, the masked estimator has substantially lower variance than the importance-weighted unbiased alternative, translating to faster SGD convergence (Appendix~\ref{app:allocation}). A formal bound is given in Appendix~\ref{app:bias}.

\begin{algorithm}[!h]
\caption{Probabilistic Chunk Masking for GRPO}
\label{alg:pcm}
\begin{algorithmic}[1]
\REQUIRE Policy $\pi_\theta$; group size $G$; chunk budget $B$; floor $p_{\min}$; refresh window $T_{\mathrm{rc}}$
\STATE Initialize phase-score buffers $\mathcal{B}_c\leftarrow[\,]$ for all phases $c$
\WHILE{not converged}
    \STATE Sample $G$ rollouts and compute binary rewards $\{r_i\}$ \hfill $\triangleright$ Eq.~\eqref{eq:advantage}
    \STATE Compute GRPO advantages $\{A_i\}$ from the group rewards \hfill $\triangleright$ Eq.~\eqref{eq:advantage}
    \STATE Label each chunk with a phase $\varphi(i,k)$ from the gripper trajectory \hfill $\triangleright$ Sec.~\ref{sec:exp-settings}
    \STATE Compute per-phase success-failure action variance
    $\{C_c^{(t)}\}_{c\in\mathcal{P}}$ from rollouts
    \hfill $\triangleright$ Eq.~\eqref{eq:Cc}
    
    \STATE Append $C_c^{(t)}$ to buffer $\mathcal{B}_c$ for all $c \in \mathcal{P}$
    \hfill $\triangleright$ Eq.~\eqref{eq:buffer}
    \IF{first batch \textbf{or} $|\mathcal{B}_c|=T_{\mathrm{rc}}$}
        \STATE Collapse buffers, share-normalize and max-normalize 
        \hfill $\triangleright$ Eq.~\eqref{eq:score}
        \STATE Apply floor:
        $p_c \leftarrow \max(p_{\min},\,\tilde\rho_c)$ for all $c$
        \hfill $\triangleright$ Eq.~\eqref{eq:keepprob}
        
        \STATE Reset phase buffers:     $\mathcal{B}_c\leftarrow[\,]$ for all $c\in\mathcal{P}$
    \ENDIF
    \FOR{each trajectory $i$ with $N_i$ chunks}
        \STATE Set chunk weights $w_{i,k} \leftarrow p_{\varphi(i,k)}$ for $k=1,\ldots,N_i$ \hfill $\triangleright$ Sec.~\eqref{sec:mask}
        \STATE Sample $\mathcal{K}_i$ with $|\mathcal{K}_i|=\min(B,N_i)$ via weighted sampling w/o replacement \hfill $\triangleright$ Eq.~\eqref{eq:sample}
        \STATE Physically remove non-selected chunks from batch; keep only $\mathcal{K}_i$ for actor update \hfill $\triangleright$ Sec.~\ref{sec:shrink}
    \ENDFOR
    \STATE Update $\pi_\theta$ using the masked GRPO loss \hfill $\triangleright$ Eq.~\eqref{eq:pcm}
\ENDWHILE
\end{algorithmic}
\end{algorithm}

\section{Empirical Evaluation}
\label{sec:exp}

\subsection{Experimental Setup}
\label{sec:exp-settings}


\paragraph{Models and Benchmarks.}
We run our experiments on OpenVLA-OFT \citep{kim2025finetuningvisionlanguageactionmodelsoptimizing}, a 7B vision-language-action model that predicts 7-DoF action chunks with chunk length $L{=}8$. We 
evaluate on three benchmarks: LIBERO-Object, LIBERO-Spatial, and LIBERO-Goal~\citep{liu2023liberobenchmarkingknowledgetransfer}. 

\paragraph{Phase Labeling.}
We assign each chunk to one of $K{=}5$ semantic phases using a deterministic, 
multi-grasp-aware rule over the per-chunk gripper-close fraction 
$g_f[j] \in [0,1]$. Active-grip chunks ($g_f[j] \geq 0.5$) capture 
sustained grasp and transport; pre-grasp labels the up-to-three chunks 
immediately before sustained closure ($0.1 \leq g_f[j] < 0.5$); release-ramp 
labels the up-to-three chunks after; approach covers the remaining 
pre-contact chunks; and tail covers post-release open-gripper chunks; 
Overlapping windows are resolved by the priority ordering 
$\texttt{active-grip} > \texttt{pre-grasp} > \texttt{release-ramp} > 
\texttt{approach} > \texttt{tail}$.
This rule is applied identically to successful and failed trajectories, 
enabling $C_c$ to compare outcomes under a shared phase partition. 
Full details are in Appendix~\ref{app:prompts}.

\paragraph{Training and Evaluation.}
We implement PCM on the SimpleVLA-RL~\citep{li2025simplevlarlscalingvlatraining} verl~\citep{10.1145/3689031.3696075} pipeline with GRPO groups of 10 rollouts per prompt, comparing against the full-trajectory chunk-level GRPO baseline. We fine-tune OpenVLA-OFT with LoRA using 2 NVIDIA H100 GPUs. We evaluate every training step using 50 held-out validation
rollouts. Results are averaged across three seeds, and we report per-step update
time and peak GPU memory. Detailed experimental settings are in Appendix~\ref{app:setup}. 

\paragraph{Research Questions.}
Our experiments are designed to answer the following:
\begin{itemize}
    \item[\textbf{RQ1}] Does variance-aware chunk selection preserve the learning dynamics of full-trajectory GRPO, measured by step-wise accuracy curves?
    \item[\textbf{RQ2}] Does selective gradient allocation reduce wall-clock time to reach a target success rate without affecting sample efficiency?
    \item[\textbf{RQ3}] How sensitive is performance to the chunk budget $B$, and is there a stable operating point that balances gradient signal with per-step speedup?
    \item[\textbf{RQ4}] Is variance-aware probabilistic selection necessary, or do simpler masking strategies achieve comparable performance under the same chunk budget?
\end{itemize}

\subsection{Results and Discussion}
\label{sec:results}

\paragraph{RQ1: Learning Dynamics.}
\begin{figure}[!h]
    \centering
    \includegraphics[width=1\linewidth]{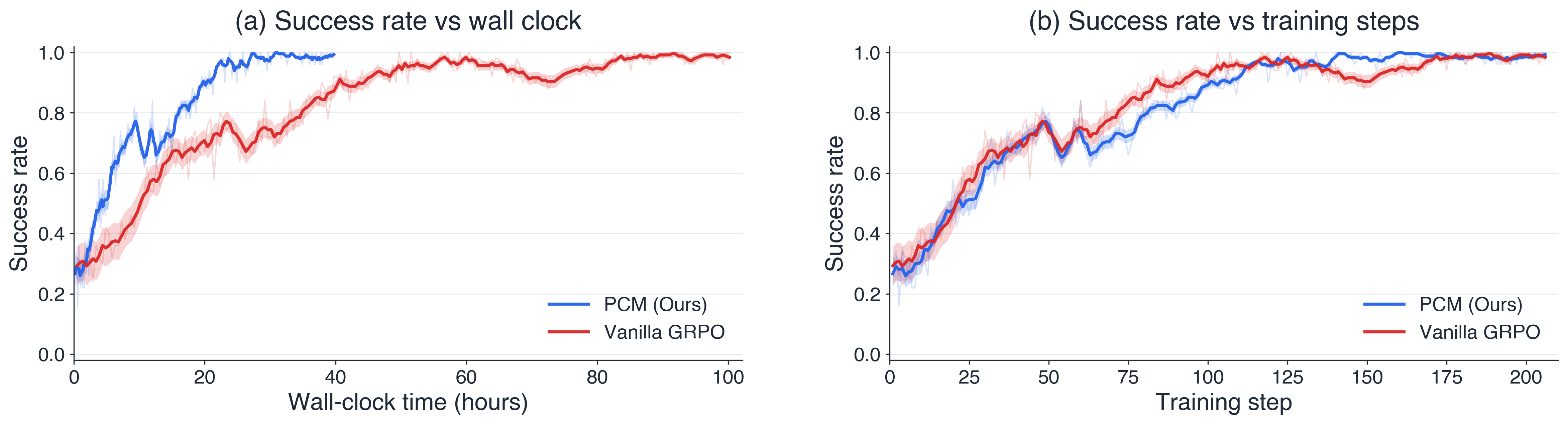}
    \vspace{-4pt}
    \caption{Success rate as a function of wall-clock time \textbf{(a)}
    and training step \textbf{(b)} on LIBERO-Object, averaged over three
    seeds. \textsc{PCM} ($B{=}12$) closely matches full-trajectory GRPO
    across training steps while converging significantly faster.}
    \label{fig:success-rate}
    \vspace{-4pt}
\end{figure}
We compare \textsc{PCM} with chunk budget $B{=}12$ against full-trajectory GRPO on LIBERO-Object using the same SFT-initialized OpenVLA-OFT checkpoint, identical rollout groups, rewards, advantages, and optimizer settings. The methods differ only in whether the actor update is computed over all chunks or the subset selected by PCM. Both are trained for a fixed budget of 200 steps.

As a function of training step (Fig.~\ref{fig:success-rate}b), PCM and GRPO exhibit nearly identical learning curves and reach matched final accuracy. This shows that PCM preserves per-step learning dynamics while improving efficiency, confirming that \textit{the selective gradient allocation does not degrade step-wise convergence behavior}.
\begin{figure}
    \centering
    \includegraphics[width=1\linewidth]{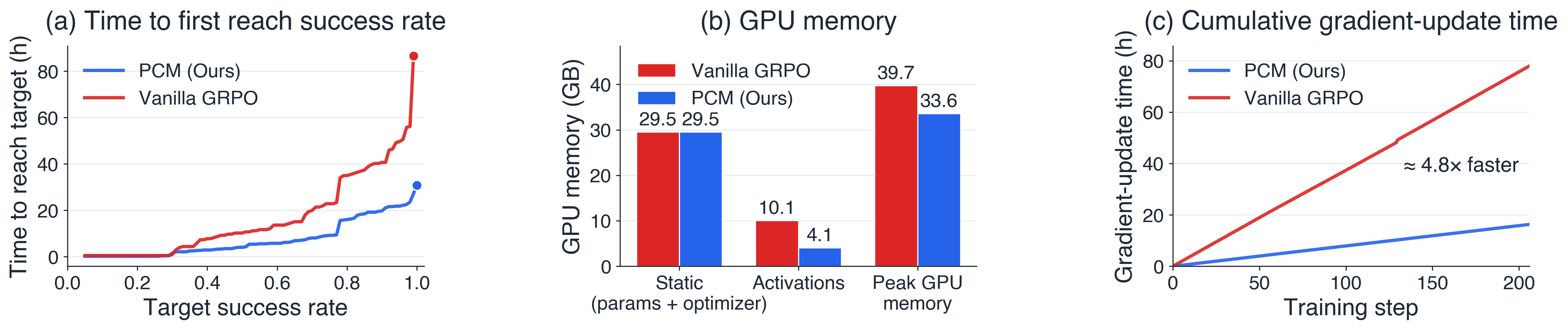}
    \vspace{-4pt}
    \caption{Efficiency on LIBERO-Object at $B{=}12$.
    \textbf{(a)} Wall-clock time to first reach each SR threshold; the
    \textsc{PCM}/GRPO gap widens at higher thresholds.
    \textbf{(b)} Activation memory is reduced by 60\%.
    \textbf{(c)} Cumulative actor-update time over 200 steps is
    $4.8\times$ faster.}
    \label{fig:metrics}
\end{figure}

\paragraph{RQ2: Wall-Clock Efficiency.} 

\begin{wraptable}{r}{0.58\linewidth}
\vspace{-8pt}
\centering
\small
\setlength{\tabcolsep}{4pt}
\begin{tabular}{lcc}
\toprule
 & Vanilla GRPO & PCM (Ours) \\
\midrule
LIBERO-Object  & $45.78 \pm 0.95$ & $\mathbf{19.23 \pm 0.57}$ \\
LIBERO-Goal    & $51.25 \pm 1.05$ & $\mathbf{21.18 \pm 0.59}$ \\
LIBERO-Spatial & $49.89 \pm 0.98$ & $\mathbf{21.23 \pm 0.63}$ \\
\midrule
Overall        & $48.97 \pm 0.99$ & $\mathbf{20.55 \pm 0.60}$ \\
\bottomrule
\end{tabular}
\vspace{3pt}
\caption{Wall-clock time in hours to reach $98\%\pm0.02$ success rate. Lower is better.}
\label{tab:convergence_comparison}
\vspace{-10pt}
\end{wraptable}

As shown in Fig.~\ref{fig:success-rate}a, PCM reaches near-saturation significantly earlier in wall-clock time. Fig.~\ref{fig:metrics}a shows that PCM reaches the target success rate of $98\%\pm0.02$ up to $2.38{\times}$ faster than vanilla GRPO, with the gap widening sharply at higher accuracy thresholds. Table~\ref{tab:convergence_comparison} shows that this trend holds across all LIBERO benchmarks, reducing the average time to $98\%\pm0.02$ success from $48.97\pm0.99$ to $20.55\pm0.60$ hours.
To understand how this speedup decomposes, Fig.~\ref{fig:metrics}b-c separate per-step compute from overall convergence. PCM reduces activation memory by $60\%$ ($10.1 \rightarrow 4.1$ GB) and peak GPU memory by $15\%$ ($39.7 \rightarrow 33.6$ GB) per step. 

Cumulative gradient-update time over 200 training steps is $4.8\times$ faster. Because the per-step learning curves are matched (Fig.~\ref{fig:success-rate}b), the wall-clock gain is entirely attributable to per-step compute savings rather than improved sample efficiency. Together these results show that the \textit{selective gradient allocation substantially reduces per-step cost while leaving sample efficiency intact}.


\paragraph{RQ3: Sensitivity to Chunk Budget $B$.} 


\begin{figure}[!h]
    \centering
    \includegraphics[width=1\linewidth]{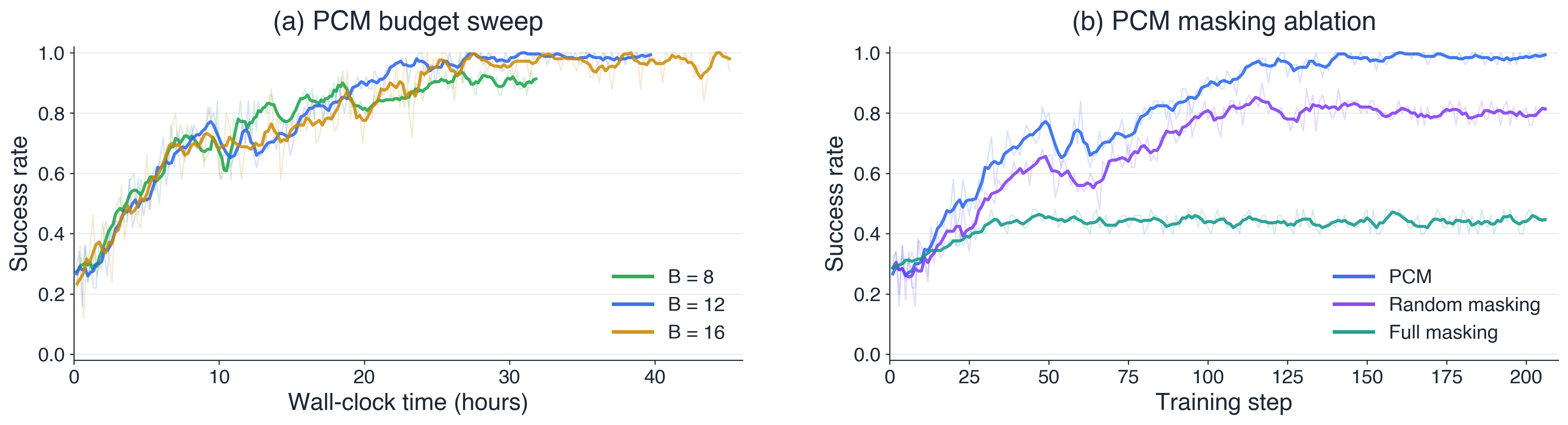}
    \vspace{-4pt}
    \caption{Ablations on LIBERO-Object.
    \textbf{(a)} Chunk budget sweep shows that $B{=}12$ preserves final
    success while retaining most of the wall-clock speedup.
    \textbf{(b)} At fixed $B{=}12$, \textsc{PCM} outperforms random
    masking and highest-variance-phase selection.}
    \label{fig:ablations}
\end{figure}

\textsc{PCM} samples a fixed budget of $B$ chunks per trajectory. Smaller budgets improve per-step efficiency but reduce gradient coverage, while larger budgets better approximate full GRPO at higher cost. We sweep B $\in$ \{8, 12, 16\} on LIBERO-Object using the same training configuration. We select this range for $B$ to bracket the inflection of the cumulative $C_c$ curve in Fig.~\ref{fig:chunk_budget}b, plotted against the fraction of trajectory chunks retained, which marks the smallest budget capturing most of the available learning signal. Details in Appendix~\ref{sec:chunk_budget}.

Fig.~\ref{fig:ablations}a shows that all budgets reach comparable final accuracy, but differ
in efficiency. $B{=}8$ is fastest per step but has degraded sample efficiency and plateaus at a slightly lower accuracy, indicating insufficient gradient signal. $B{=}16$ provides no accuracy gain over $B{=}12$ while increasing convergence time, showing diminishing returns beyond high-$C_c$ phases. $B{=}12$ lies at the inflection point, matching final accuracy while preserving the wall-clock advantage. We use $B{=}12$ as our default for all other experiments. $B{=}12$ is large enough to consistently sample from high-$C_c$ phases under the probabilistic selection rule while remaining small enough to deliver substantial per-step compute and memory savings.

\paragraph{RQ4: Variance Concentration and the Role of Variance-Aware Selection.} 

To test whether variance-aware probabilistic selection is necessary, we compare \textsc{PCM} at $B{=}12$ against two ablations that occupy opposite extremes of the allocation spectrum: \textit{random masking}, which samples $B$ chunks uniformly at random from each trajectory (no concentration on high-$C_c$ phases), and \textit{full masking}, which restricts updates to the single highest-$C_c$ phase and discards all other chunks (maximum concentration, no exploration). Together, these bracket \textsc{PCM}, which lies between these extremes via probabilistic $C_c$-weighted sampling.

Fig.~\ref{fig:ablations}b shows that both ablations underperform PCM. Random masking plateaus at $\sim$78\% success rate, 22 points below PCM at the same chunk budget. Full masking fails more sharply, plateauing at a low accuracy of ${\sim}43\%$. Since all methods use the same $B{=}12$ budget, these gaps isolate the contribution of variance-aware selection from compute reduction.

Fig.~\ref{fig:grad_prog} shows PCM's realized gradient allocation across training, revealing three mechanisms that together explain its performance.

\begin{wrapfigure}{r}{0.45\linewidth}
\vspace{-10pt}
\centering
\includegraphics[width=\linewidth]{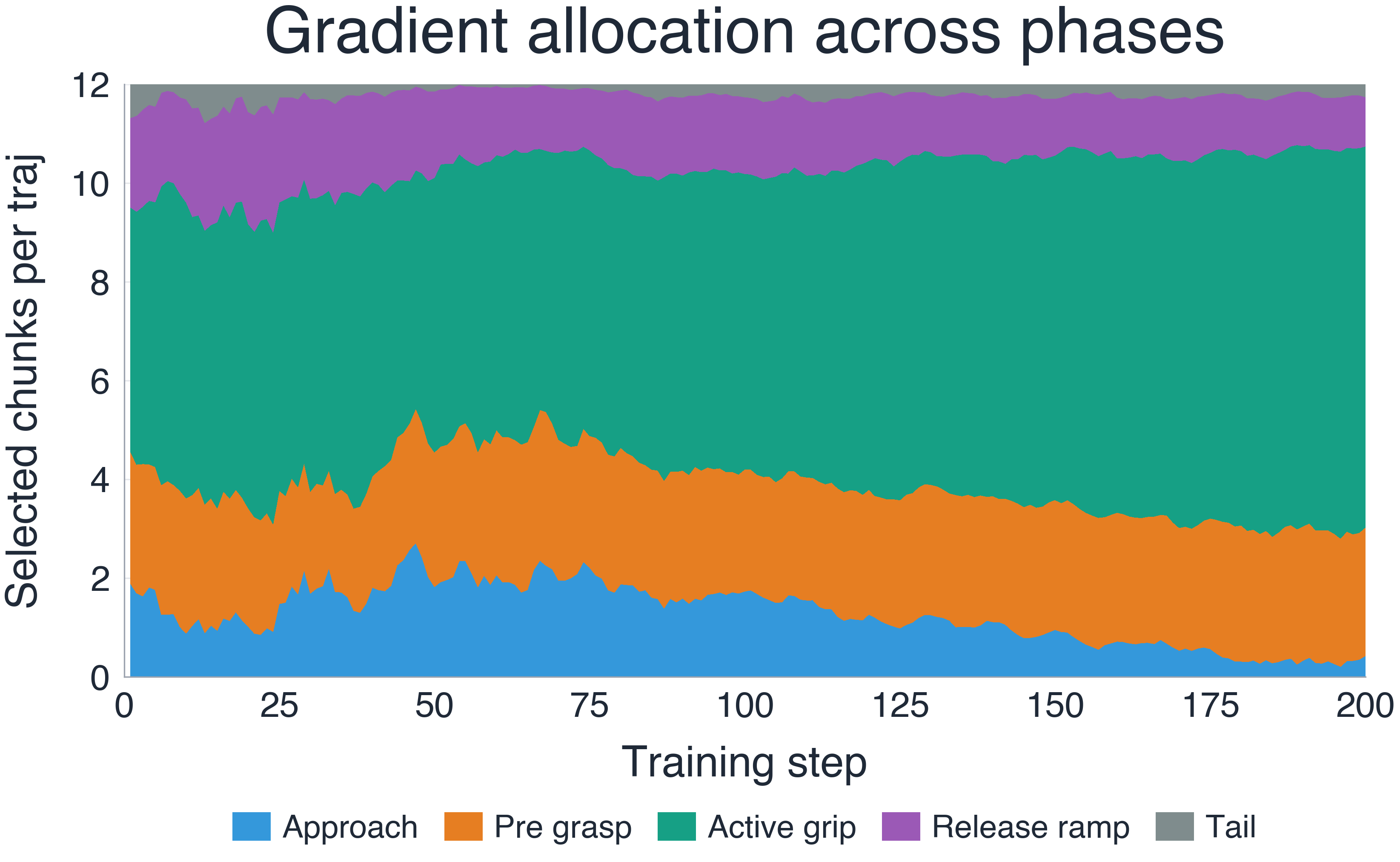}
\vspace{-10pt}
\caption{Phase-wise gradient allocation under PCM over training steps.}
\vspace{-12pt}
\label{fig:grad_prog}
\end{wrapfigure}

\noindent \textbullet\ \, \textbf{Concentration:} Active-grip ($\sim$7--8 chunks per trajectory) and pre-grasp
($\sim$3--5 chunks per trajectory) dominate throughout, matching the high-$C_c$ phases (Fig.~\ref{fig:phase_time}) and the $\sqrt{V_c}$-weighted allocation predicted by Theorem~\ref{thm:main}. The agreement between predicted and realized allocation is the empirical confirmation that $C_c$ tracks $V_c$ in practice, validating Lemma 2's proxy assumption.

\noindent \textbullet\ \, \textbf{Exploration:} 
Probabilistic $C_c$-weighted sampling (Eq.~\ref{eq:sample}) ensures that every phase has nonzero selection probability whenever its weight is above zero, providing the primary exploration mechanism. The $p_{\min} = 0.1$ floor (Eq.~\ref{eq:keepprob}) acts as a safety net for low-scoring phases
(e.g., \textit{tail}): it prevents transient zero estimates from collapsing a phase's weight to zero between buffer recomputes, which would otherwise cause online phase scores to fail to recover when the true $V_c$ ordering shifts during training.

\noindent \textbullet\ \, \textbf{Adaptation:} Approach and release-ramp
allocations decline from $\sim$2 to $\sim$1 chunks and from $\sim$2.5 to
$\sim$1.5 chunks, respectively, as the policy masters those phases, with budget shifting
toward phases where learning signal concentrates. This online dynamic that adapts with evolving success-failure gap of the policy cannot be reproduced by static weighting.

The ablations occupy degenerate corners of this design space. Random
masking explores but lacks concentration, matching the suboptimal
uniform-allocation regime in Theorem~\ref{thm:main}, where $b_c = B/K$ (equal budget across phases) is 
inefficient when $V_c$ is non-uniform across phases. Full masking concentrates updates on the highest-$C_c$ phase but discards weaker adjacent-phase signals and prevents exploration and adaptation.

Only PCM has all three properties: $C_c$-weighted keep
probabilities with a $p_{\min}$ floor to concentrate updates on
contact-rich phases while retaining lower-scoring phases. The ablations show that variance-aware probabilistic selection substantially outperforms uninformed masking strategies in exploiting the concentrated structure of $C_c$.

\section{Limitations and Future Work}

Our evaluation spans the three LIBERO suites (Object, Spatial, Goal), which test transfer of object, spatial, and task knowledge respectively (Table \ref{tab:convergence_comparison}). We do not test longer horizons or bimanual coordination; however, the underlying allocation principle applies whenever $V_c$ is non-uniform, a structural feature of any GRPO setting where the policy has not converged uniformly. Experiments use a gripper-based phase partition rule. Extension to other settings is straightforward in principle (any consistent decomposition suffices), though we do not validate alternative phase abstractions empirically. We do not directly benchmark against methods targeting sample efficiency through different objectives (e.g., entropy-based token selection or prompt filtering). These methods differ from PCM in both granularity (token vs. phase) and signal (policy-internal uncertainty vs. outcome-grounded divergence); adapting them to the VLA chunk-level setting is beyond the scope of this work. 

Future work includes extending PCM to longer-horizon and bimanual tasks, validating learned temporal phase segmenters or LLM-derived phase labelers, and applying the variance-allocation principle to LLM reasoning where $C_c$ can be computed against verified outcomes. Combining PCM with orthogonal sample-efficiency methods remains an open direction.

\newpage

{\small
\bibliographystyle{plainnat}
\bibliography{references}
}








\newpage
\appendix

{\Large\bfseries Appendix}

\section{Problem Formulation and Assumptions}
\label{app:problem}


For clarity, we explicitly restate the problem formulation and assumptions underlying PCM. While these are implicit in the main text, we formalize them here to precisely define the input setting, the chunk-selection objective, and the structural conditions required for the analysis.

\paragraph{Input.} A VLA policy $\pi_{\theta}^0$ initialized from a supervised fine-tuning checkpoint, a simulated environment $\mathcal{E}$, a distribution over tasks $\mathcal{T}$ with binary reward $r: \tau \to \{0,1\}$, and a group of $G$ rollouts $\{\tau_i\}_{i=1}^G$ where each trajectory decomposes into $N_i = \lceil T_i/L \rceil$ chunks of length $L$, and a fixed gradient compute budget $B \ll \sum_i N_i$.

\paragraph{Output.} A chunk selection policy $\mathcal{M}: \{\tau_i, r_i\}_{i=1}^G \to \{K_i \subseteq \{1, \ldots, N_i\}\}_{i=1}^G$ that minimizes
\begin{align}
    \sum_{i=1}^G \mathrm{Var}\left(\sum_{k \in K_i} A_i \cdot \nabla_\theta \log \pi_\theta(a_{i,k} \mid s_{i,k})\right) \label{eq:objective}
\end{align}
subject to $|K_i| = B$ for all $i$ (fixed compute budget), and $\mathcal{M}$ using only $\{\tau_i, r_i\}_{i=1}^G$ without access to auxiliary reward models or learned critics.

\paragraph{Assumptions.} We assume the following:
\begin{itemize}[leftmargin=*]
    \item \textbf{Binary reward:} $r: \tau \to \{0,1\}$ is trajectory-level, with each group containing both successes and failures ($0 < \mu_r < 1$); when all rollouts succeed or fail, advantages collapse and the variance signal is uninformative.
    \item \textbf{Fixed chunk length:} The policy produces actions in chunks of fixed length $L$, so the phase decomposition $\phi(i,k)$ is well-defined and consistent across trajectories.
    \item \textbf{Deterministic phase labeling:} $\phi(i,k) \in \mathcal{P}$ is computable from the gripper command trajectory alone, requiring no learned model.
    \item \textbf{Shared phase structure:} Successful and failed rollouts share the same phase partition $\mathcal{P}$, so per-phase action variance compares like with like across outcome groups.
    \item \textbf{SFT initialization:} $\pi_\theta^0$ is initialized from a supervised fine-tuning checkpoint, ensuring gradient variance concentrates on a small subset of outcome-critical phases.
\end{itemize}

\section{Theoretical Results}
\label{sec:formal_proofs}

\setcounter{theorem}{0}
\setcounter{lemma}{0}

In this section we provide proofs for the formal results in the main paper.

\begin{lemma}[Phase gradient variance]
\label{lem:variance}
Let $c \in \mathcal{P}$ be a phase in which $\pi_\theta$ has converged, i.e.,
the action distributions of successful and failed rollouts are identical in
phase $c$.
Then $\|g_c(\theta)\| \approx 0$ and $V_c \approx 0$, so gradient
samples from phase $c$ contribute negligible signal.
Conversely, for a phase $c$ in which the base policy is underspecified,
$V_c$ is large and gradient samples from phase $c$ are informative.
\end{lemma}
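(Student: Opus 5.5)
We split the argument into the converged case (both $g_c$ and $V_c$ vanish) and the underspecified case (a lower bound on $V_c$), working throughout with the phase decomposition in Eq.~\eqref{eq:decomp}.

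\textbf{Converged case, step 1: rewrite $g_c$ by outcome.} Binary rewards give a two-valued advantage: $A^+ = (1-\mu_r)/\sigma_r$ on successes and $A^- = -\mu_r/\sigma_r$ on failures. The $\varepsilon$ is dropped to first order. These satisfy $\mu_r A^+ + (1-\mu_r)A^- = 0$. Write $\psi(s,a) = \nabla_\theta\log\pi_\theta(a\mid s)$ and $S_c^{\pm} = \mathbb{E}\bigl[\sum_{k:\varphi(i,k)=c}\psi(s_{i,k},a_{i,k}) \mid r_i=1 \text{ or } 0\bigr]$. Conditioning on $r_i$ in Eq.~\eqref{eq:decomp} then gives
\begin{equation*}
g_c = -\bigl(\mu_r A^+ S_c^+ + (1-\mu_r)A^- S_c^-\bigr) = -\mu_r A^+\bigl(S_c^+ - S_c^-\bigr).
\end{equation*}

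\textbf{Converged case, step 2: conclude $g_c\approx 0$.} We interpret the hypothesis as saying the joint law of $(s_{i,k},a_{i,k})$ restricted to phase $c$, and the number of phase-$c$ chunks, are (approximately) the same under $r_i=1$ and $r_i=0$. Then $S_c^+ = S_c^-$ up to an error controlled by the total-variation distance between the two conditional laws times $\sup\|\psi\|$. Hence $\|g_c\|\approx 0$. Stating this as a TV-distance bound makes the ``$\approx$'' precise.

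\textbf{Converged case, step 3: conclude $V_c\approx 0$.} Here we use the paper's convention that $V_c$ is a within-trajectory chunk variance with $A_i$ fixed, equivalently a variance of the sum of advantage-weighted scores. When the policy has converged in phase $c$, its action distribution there is (near-)deterministic and identical across outcomes. We formalize ``converged'' as local concentration: covariance of $\pi_\theta(\cdot\mid s)$ tending to its limit, with score fluctuations $\psi - \mathbb{E}\psi$ vanishing along the support actually visited. Under this, $\mathrm{Var}(A_i\psi\mid\varphi=c) \le \max(A^{+2},A^{-2})\,\mathrm{Var}(\psi\mid\varphi=c)\approx 0$.

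\textbf{Expected main obstacle.} This step is the real difficulty. Identical success/failure distributions alone do not force small score variance: the score has variance equal to the Fisher information even at optimum. So an explicit interpretive assumption is needed. Either (a) $V_c$ is measured on the component of the gradient that is correlated with the outcome, i.e.\ the between-outcome part of the law of total variance; or (b) convergence entails vanishing Fisher-weighted fluctuation in the relevant directions. We would adopt (a), which makes the proof clean. By the law of total variance, $\mathrm{Var}(A\psi) = \mathbb{E}[\mathrm{Var}(A\psi\mid r)] + \mathrm{Var}(\mathbb{E}[A\psi\mid r])$. The outcome-informative term is $\mathrm{Var}(\mathbb{E}[A\psi\mid r]) = \mu_r(1-\mu_r)\|A^+m^+ - A^-m^-\|^2$, where $m^{\pm}$ are the conditional mean scores. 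This term is small exactly when $m^+\approx m^-$, and since $A^+ \neq A^-$ we also need $m^\pm\approx 0$, which holds because the expected score under the policy's own sampling is zero.

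\textbf{Underspecified case.} This direction runs the same decomposition backwards. Underspecification means the conditional action laws differ, $m^+ \neq m^-$. The between-outcome term is then bounded below by $\mu_r(1-\mu_r)\,\min(A^{+2},A^{-2})$ times a positive separation, so $V_c$ is large. Moreover $g_c = -\mu_r A^+(S_c^+-S_c^-)\neq 0$, so the samples are informative. This matches the quantitative form later obtained in Lemma~\ref{lem:proxy} via the Gaussian mean difference $C_c$.
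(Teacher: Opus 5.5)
\textbf{There is a genuine gap in the converse direction.} Your Steps 1--2 are sound. They are the paper's cancellation argument made precise, $g_c=-\mu_r A^+(S_c^+-S_c^-)$. You are also right that identical success/failure action laws do not force the literal variance in Eq.~\eqref{eq:Vc} to vanish; the paper's own proof only asserts that equivalence from the zero mean of $A_i$.

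The gap is in the resolution (a) you adopt. The between-outcome term of the law of total variance applied to $A\psi$ does not measure outcome divergence, so your underspecified case fails. Write $m=\mu_r m^++(1-\mu_r)m^-$ and $d=m^+-m^-$, and use $\sigma_r^2=\mu_r(1-\mu_r)$. Then
\[
\mu_r(1-\mu_r)\,\bigl\|A^+m^+-A^-m^-\bigr\|^2=\bigl\|m+(1-2\mu_r)\,d\bigr\|^2 .
\]
Under the fact you invoke ($m=0$), this equals $(1-2\mu_r)^2\|d\|^2$, which is zero for every balanced group however large $d$ is.

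A one-chunk example makes this concrete. Take a binary action with $\pi_\theta(1)=e^{\theta}/(1+e^{\theta})$ at $\theta=0$, reward $r=1$ iff $a=1$, and a balanced group. Then $A^\pm=\pm1$, $\psi(1)=1/2$ and $\psi(0)=-1/2$, so $A\psi\equiv 1/2$. Your between-outcome term (indeed the whole variance) is $0$, even though the two action laws are disjoint and $g_c=-1/2\neq 0$. So the inequality $\|A^+m^+-A^-m^-\|\ge\min(|A^+|,|A^-|)\,\|m^+-m^-\|$ that your lower bound needs is false. The advantage flips the sign of failure scores, so in an informative phase the two conditional means of $A\psi$ \emph{agree} rather than differ.

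Two smaller problems compound this:
\begin{itemize}
\item Your converged case under (a) needs $m\approx 0$. But $\varphi(i,k)$ is computed from the gripper commands inside $a_{i,k}$, so conditioning on the phase conditions on the action, and $\mathbb{E}[\psi\mid\varphi=c]$ need not vanish.
\item Option (a) is incompatible with the ``$A_i$ fixed'' convention you cite in Step 3. Under that convention there is no between-outcome component at all.
\end{itemize}

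\textbf{The fix.} Locate the outcome-correlated signal in the score itself or in the mean, not in the variance of $A\psi$. Use either the between-outcome variance of $\psi$ within phase $c$, which is $\mu_r(1-\mu_r)\|m^+-m^-\|^2$, or the per-chunk mean $\mathbb{E}[A\psi\mid\varphi=c]=\sigma_r(m^+-m^-)$. The latter is the per-chunk analogue of your Step 1 identity. Either quantity:
\begin{itemize}
\item vanishes exactly when $m^+=m^-$;
\item is strictly positive, of size $\mu_r(1-\mu_r)\|m^+-m^-\|^2$ (respectively $\sigma_r\|m^+-m^-\|$), when the phase is underspecified;
\item needs no assumption on $m$;
\item is what $C_c$ in Lemma~2 actually tracks, since $C_c$ is a difference of conditional means.
\end{itemize}
State explicitly that this is the sense in which ``$V_c$'' is meant. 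The example above shows that for the literal variance in Eq.~\eqref{eq:Vc} the converse direction of the lemma is false.
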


\begin{proof}
Since $A_i$ is zero-mean by construction (Eq.~\eqref{eq:advantage}), we have
$V_c = 0$ if and only if
$\pi_\theta(a_{i,k} \mid s_{i,k})$ is identical across successful and failed
rollouts for every chunk $(i,k)$ with $\varphi(i,k) = c$.
This holds exactly when outcomes are independent of the actions taken in phase
$c$. When these conditional action distributions match, positive- and negative-advantage score-function terms are drawn from the same distribution and cancel in expectation, leaving $\|g_c(\theta)\|\approx 0$. When they diverge, phase $c$ carries useful learning signal.
\end{proof}

\begin{lemma}[$C_c$ as a proxy for $V_c$]
\label{lem:proxy}
For a policy $\pi_\theta$ that is locally Gaussian with per-dimension variance
$\sigma_\pi^2$, the per-phase gradient variance satisfies
  $V_c \;\geq\; {C_c^2}/{4\sigma_\pi^2}$.
\end{lemma}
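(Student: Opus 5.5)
The idea is to write out the Gaussian score function explicitly and then obtain the bound from a variance decomposition over the binary outcome $r_i$. Take $\pi_\theta(a\mid s)=\mathcal N(\mu_\theta(s),\sigma_\pi^2 I)$ locally. Then $\nabla_\theta\log\pi_\theta(a\mid s)=J_\theta(s)^\top (a-\mu_\theta(s))/\sigma_\pi^2$, where $J_\theta(s)=\nabla_\theta\mu_\theta(s)$. As the remark after the statement allows, we absorb $\|J_\theta\|^2$ into the constant. Concretely, we work with the reduced score $u_{i,k}=(a_{i,k}-\mu_\theta(s_{i,k}))/\sigma_\pi^2$, and $V_c$ is understood as the variance of $X=A_i\,u_{i,k}$ given $\varphi(i,k)=c$. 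Variance of a vector means the trace of its covariance.

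\textbf{Conditioning on the outcome.} Apply the law of total variance with $r_i\in\{0,1\}$:
\[
V_c=\mathrm{Var}(X)\;\ge\;\mathrm{Var}\bigl(\mathbb E[X\mid r_i]\bigr).
\]
Given $r_i$, the advantage $A_i$ is a constant: $A^+=(1-\mu_r)/\sigma_r$ on successes and $A^-=-\mu_r/\sigma_r$ on failures, where $\sigma_r^2=\mu_r(1-\mu_r)$ and we drop $\varepsilon$. Hence $\mathbb E[X\mid r]=A^{r}\,m_r$, with $m_r=\mathbb E[u\mid r,\varphi=c]$.

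\textbf{Variance of the two-point variable.} For a two-point random variable taking value $y_1$ with probability $p=\mu_r$ and $y_0$ otherwise, the variance is $p(1-p)\|y_1-y_0\|^2$. Here $y_1=A^+m_1$ and $y_0=A^-m_0$. To connect this with $C_c$, I would use the fact that, at the rollout-generating parameters, $\mu_\theta(s)$ is the policy mean. The conditional means then differ exactly by the action-mean gap: $m_1-m_0=(\bar a_1-\bar a_0)/\sigma_\pi^2$, where we assume the phase-conditional state means of $\mu_\theta$ coincide across outcomes (the shared-phase-structure assumption of Appendix~\ref{app:problem}) or treat any difference as absorbed. This gives $\|m_1-m_0\|=C_c/\sigma_\pi^2$.

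\textbf{Main obstacle.} The difficulty is that $y_1-y_0=A^+m_1-A^-m_0$, not a multiple of $m_1-m_0$. My first approach would be to use the zero-mean constraint $\mu_r m_1+(1-\mu_r)m_0=\mathbb E[u]\approx 0$, which holds because the score has zero mean under the policy. This gives $m_1=(1-\mu_r)\Delta$ and $m_0=-\mu_r\Delta$, where $\Delta=m_1-m_0$. Then
\[
A^+m_1-A^-m_0=\frac{(1-\mu_r)^2+\mu_r^2}{\sigma_r}\,\Delta,
\]
so
\[
\mathrm{Var}\bigl(\mathbb E[X\mid r]\bigr)=\bigl((1-\mu_r)^2+\mu_r^2\bigr)^2\|\Delta\|^2\;\ge\;\tfrac14\|\Delta\|^2,
\]
using $(1-p)^2+p^2\ge\tfrac12$. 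Combining, $V_c\ge C_c^2/(4\sigma_\pi^4)$.

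\textbf{Matching the stated constant.} To get exactly $C_c^2/(4\sigma_\pi^2)$, I would rescale the score by $\sigma_\pi$. This amounts to measuring actions in standardized units $a/\sigma_\pi$, or equivalently absorbing a factor $\sigma_\pi^{-2}$ together with $\|J_\theta\|^2$ into the constant, which the post-statement remark permits. I expect the delicate points to be justifying the zero-mean identity within a single phase and this normalization bookkeeping. I would state both explicitly as part of the \emph{locally Gaussian} hypothesis.
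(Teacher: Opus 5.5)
Your route matches the paper's: the paper also writes the Gaussian score, applies $\mathrm{Var}(X)\ge\mathrm{Var}(\mathbb{E}[X\mid r_i])$ with the outcome as the conditioning event, and absorbs $\|\nabla_\theta\mu_\theta\|^2$ and ``balanced-group constants'' into $\sigma_\pi^2$. When this step is carried out explicitly it fails, and your computation of it contains a sign error. With $m_1=(1-\mu_r)\Delta$ and $m_0=-\mu_r\Delta$, one has $A^+m_1=(1-\mu_r)^2\Delta/\sigma_r$ and $A^-m_0=(-\mu_r/\sigma_r)(-\mu_r\Delta)=+\mu_r^2\Delta/\sigma_r$. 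Hence
\[
A^+m_1-A^-m_0=\frac{(1-\mu_r)^2-\mu_r^2}{\sigma_r}\,\Delta=\frac{1-2\mu_r}{\sigma_r}\,\Delta,
\qquad
\mathrm{Var}\bigl(\mathbb{E}[X\mid r_i]\bigr)=(1-2\mu_r)^2\,\|\Delta\|^2,
\]
which is zero for a balanced group. The cause is structural, not bookkeeping. The negative advantage flips the failure group's score mean back onto $+\Delta$, so both conditional means of $X=A_iu$ point the same way and conditioning on the outcome does not separate them.

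Your intermediate claim $V_c\ge C_c^2/(4\sigma_\pi^4)$ is in fact false even when all your side assumptions hold. Take a single state, $d=1$, $a=\mu+\sigma_\pi\xi$ with $\xi\sim\mathcal{N}(0,1)$, and $r=\mathbf{1}\{\xi>0\}$. Then $\mu_r=\tfrac12$, $A_i=\pm1$ and $X=|\xi|/\sigma_\pi$, so $\mathbb{E}[X\mid r]$ is constant and $V_c=(1-2/\pi)/\sigma_\pi^2$. Meanwhile $C_c=2\sigma_\pi\sqrt{2/\pi}$ gives $C_c^2/(4\sigma_\pi^4)=2/(\pi\sigma_\pi^2)>V_c$. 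No unit rescaling fixes this. The paper's ``balanced-group constant'' is exactly the factor that vanishes, so its sketch does not close the gap either.

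What your setup does control is the mean, not the variance. Since $A_i$ is a centered function of $r_i$, $\mathbb{E}[A_iu]=\mu_rA^+m_1+(1-\mu_r)A^-m_0=\sigma_r(m_1-m_0)$, with no zero-mean assumption needed. So $\mathbb{E}\|X\|^2\ge\|\mathbb{E}X\|^2=\mu_r(1-\mu_r)\|\Delta\|^2$, which under your equal-state-mean assumption equals $C_c^2/(4\sigma_\pi^4)$ at $\mu_r=\tfrac12$. That is a correct bound on the phase signal $\|g_c\|$ (or the second moment), and it is probably what the $\tfrac14$ reflects. A genuine variance bound would have to come from the within-group term $\mathbb{E}[\mathrm{Var}(X\mid r_i)]$ using Gaussianity. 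The example shows the constant $\tfrac14$ cannot survive there, since $V_c/\|\Delta\|^2=(\pi-2)/8$.

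Two further steps also need repair.
\begin{itemize}
\item Equality of $\mathbb{E}[\mu_\theta(s)\mid r_i,\varphi=c]$ across outcomes is not the paper's shared-phase-structure assumption, which only says both outcome groups use the same partition $\mathcal{P}$. A state-mean gap enters $C_c$ additively, so it cannot be absorbed into a constant, and $C_c$ can be large with $m_1=m_0$.
\item The mixing weight inside phase $c$ is $\Pr(r_i=1\mid\varphi(i,k)=c)$, not $\mu_r$, because successes and failures contribute different numbers of chunks to a phase. Phase labels are also computed from the gripper actions, so conditioning on $\varphi=c$ biases the noise, which undercuts the within-phase zero-mean identity.
\end{itemize}
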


 
\begin{proof}
Let $\mu^+_c = \mathbb{E}[a_{i,k} \mid r_i=1,\, \varphi(i,k)=c]$ and
$\mu^-_c = \mathbb{E}[a_{i,k} \mid r_i=0,\, \varphi(i,k)=c]$.
For a Gaussian policy $\pi_\theta(\cdot \mid s) = \mathcal{N}(\mu_\theta(s), \sigma_\pi^2 I)$,
the score function is $\nabla_\theta \log \pi_\theta(a \mid s) =
(a - \mu_\theta(s))/\sigma_\pi^2 \cdot \nabla_\theta \mu_\theta(s)$.
Conditioning on the outcome group, the variance of the score is lower-bounded
by the variance of the mean action across groups.
Specifically, for any random variable $X$ and a binary conditioning event $E$,
$\mathrm{Var}(X) \geq \mathrm{Var}(\mathbb{E}[X \mid E])$ by the law of total
variance. Applying this with $X = A_i \cdot \nabla_{\theta} \log \pi_{\theta}(a_{i,k} \mid s_{i,k})$ and $E = \{r_i = 1\}$ versus $\{r_i = 0\}$, and using $|A_i| \leq 1/\varepsilon$ as a finite bound, yields the equation in the lemma after absorbing the gradient norm $\|\nabla_{\theta} \mu_{\theta}\|^2$ and balanced-group constants into $\sigma_\pi^2$. The bound preserves the relative ordering of $\{V_c\}$ across phases.
\end{proof}

\begin{theorem}[Optimal phase allocation]
\label{thm:main}
Let $b_c$ denote the number of chunks sampled from 
phase $c$, with $\sum_{c \in \mathcal{P}} b_c = B$, 
and let $N_c$ denote the expected number of chunks 
in phase $c$ per trajectory. The allocation minimizing the variance of the unbiased ratio estimator of the GRPO gradient
(Eq.~(\ref{eq:grpo})) subject to the budget constraint is
\begin{equation}
    b_c^* = B \cdot \frac{N_c\sqrt{V_c}}
    {\sum_{c' \in \mathcal{P}} N_{c'}\sqrt{V_{c'}}}.
    \label{eq:optimal-alloc}
\end{equation}
\end{theorem}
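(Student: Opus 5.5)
This is a stratified-sampling argument: treat phases as strata and minimize estimator variance with a Lagrange multiplier, following the Neyman-allocation template.

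\textbf{Step 1: the estimator.} For a fixed trajectory, the full GRPO gradient contribution is the stratified total
\[
\sum_c \sum_{k:\varphi(i,k)=c} X_{i,k}, \qquad X_{i,k}=A_i\nabla_\theta\log\pi_\theta(a_{i,k}\mid s_{i,k}).
\]
Draw $b_c$ chunks uniformly without replacement from the $N_c$ chunks of phase $c$, independently across phases. The estimator is
\[
\hat g=\sum_c \frac{N_c}{b_c}\sum_{k\in\mathcal K_c} X_{i,k}.
\]
This is the ``unbiased ratio estimator'': each phase total is estimated by $N_c$ times the sample mean. Unbiasedness follows from linearity, since each sampled chunk in phase $c$ has inclusion probability $b_c/N_c$.

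\textbf{Step 2: its variance.} Independence across strata gives
\[
\mathrm{Var}(\hat g)=\sum_c \frac{N_c^2 V_c}{b_c}\Bigl(1-\frac{b_c}{N_c}\Bigr).
\]
Here $V_c$ is the within-phase variance from Eq.~\eqref{eq:Vc}, and the variance of a vector is its trace. The factor $1-b_c/N_c$ is the finite-population correction. Expanding, the term $-\sum_c N_c V_c$ does not depend on $b$. Also, $N_c$ is replaced by its expectation, as in the statement. The objective therefore reduces to $\sum_c N_c^2V_c/b_c$.

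\textbf{Step 3: minimize under the budget.} I would relax $b_c$ to be continuous with $\sum_c b_c=B$. The Lagrangian stationarity condition is
\[
-\frac{N_c^2V_c}{b_c^2}+\lambda=0,
\qquad\text{so}\qquad b_c\propto N_c\sqrt{V_c}.
\]
Normalizing by the budget yields Eq.~\eqref{eq:optimal-alloc}. The objective is convex on $b_c>0$ (a sum of $1/b_c$ terms), so this point is the global minimum. For an alternative check, Cauchy--Schwarz gives
\[
\Bigl(\sum_c b_c\Bigr)\Bigl(\sum_c \frac{N_c^2V_c}{b_c}\Bigr)\ge\Bigl(\sum_c N_c\sqrt{V_c}\Bigr)^2,
\]
with equality exactly at $b_c\propto N_c\sqrt{V_c}$. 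This also yields the minimal variance $\bigl(\sum_c N_c\sqrt{V_c}\bigr)^2/B$, which I would reuse for the speedup comparison against uniform allocation in Appendix~\ref{app:allocation}.

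\textbf{Main obstacle: making Step 2 rigorous in this setting.} Chunks in one trajectory share $A_i$ and are temporally correlated, so sampled terms are not exchangeable draws with variance $V_c$. I would handle this in one of two ways:
\begin{itemize}
\item assume within-phase chunks are exchangeable conditional on the trajectory, using the paper's convention that $A_i$ is fixed within the trajectory; or
\item define $V_c$ as the finite-population variance of the $X_{i,k}$ in phase $c$.
\end{itemize}
Either way, the standard stratified-sampling variance formula applies exactly. The remaining caveats are minor and I would state them explicitly:
\begin{itemize}
\item The allocation ignores integrality of $b_c$ and the cap $b_c\le N_c$. If the cap binds, the standard fix is to saturate that phase and re-apply the rule to the remaining phases.
\item Phases with $V_c=0$ receive $b_c^*=0$, which is consistent with Lemma~\ref{lem:variance}.
\end{itemize}
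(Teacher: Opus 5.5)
Your proposal is correct and follows the paper's proof essentially step for step. It uses the same stratified ratio estimator with the $N_c/b_c$ rescaling, the same reduction to minimizing $\sum_c N_c^2V_c/b_c$ subject to $\sum_c b_c=B$, and the same Cauchy--Schwarz equality condition $b_c\propto N_c\sqrt{V_c}$, which you also back with a Lagrangian and convexity argument. The one difference is a small improvement on your side: the paper's variance step treats the $b_c$ without-replacement draws as independent and omits the finite-population correction, whereas you keep it and note that it only adds the $b$-independent term $-\sum_c N_cV_c$, so the minimizer is unchanged.
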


\begin{proof} We analyze the idealized stratified ratio estimator with deterministic per-phase budget bc; the masked loss in Eq.~(\ref{eq:pcm}) implements this allocation in expectation under the sampling rule of Eq.~(\ref{eq:sample}), at the cost of a finite-sample bias bounded by the contribution of low-Vc phases (Sec.~\ref{sec:shrink}). The GRPO gradient decomposes by phase as in 
Eq.~\eqref{eq:decomp}. PCM estimates the full phase gradient 
$g_c(\theta)$, which sums over all $N_c$ chunks 
in phase $c$, via the ratio estimator
\begin{equation}
    \hat{g}_c = \frac{N_c}{b_c}\sum_{k \in K_c}
    A_i \cdot \nabla_\theta \log \pi_\theta
    (a_{i,k} \mid s_{i,k}),
    \label{eq:ratio-estimator}
\end{equation}
where $K_c \subseteq \{k : \varphi(i,k) = c\}$ 
with $|K_c| = b_c$ chunks drawn uniformly without 
replacement from the $N_c$ available chunks in 
phase $c$. The $N_c/b_c$ factor scales up the 
sampled sum to account for the full phase size, 
making $\hat{g}_c$ unbiased for $g_c(\theta)$:
\begin{equation*}
    \mathbb{E}_{K_c}[\hat{g}_c] 
    = \frac{N_c}{b_c} \cdot b_c \cdot 
    \frac{g_c(\theta)}{N_c} = g_c(\theta).
\end{equation*}
The variance of $\hat{g}_c$ is
\begin{equation}
    \mathrm{Var}(\hat{g}_c) = 
    \frac{N_c^2}{b_c^2} \cdot b_c \cdot V_c
    = \frac{N_c^2 V_c}{b_c},
    \label{eq:ratio-var}
\end{equation}
where $V_c$ is the per-chunk variance within 
phase $c$  (Eq.~\eqref{eq:Vc}), and the $b_c$ in the 
numerator is the variance of a single-chunk 
sample, reduced by $b_c$ draws.

The total estimator variance across all phases is
\begin{equation}
    \sigma^2(\{b_c\}) = \sum_{c \in \mathcal{P}} 
    \frac{N_c^2 V_c}{b_c}.
    \label{eq:total-var}
\end{equation}
We minimize $\sigma^2(\{b_c\})$ subject to 
$\sum_{c \in \mathcal{P}} b_c = B$, $b_c \geq 0$. 
By the Cauchy-Schwarz inequality,
\begin{equation*}
    \left(\sum_{c \in \mathcal{P}} N_c\sqrt{V_c}
    \right)^2
    = \left(\sum_{c \in \mathcal{P}} 
      \frac{N_c\sqrt{V_c}}{\sqrt{b_c}} \cdot 
      \sqrt{b_c}\right)^2
    \leq 
    \underbrace{\left(\sum_{c \in \mathcal{P}} 
    \frac{N_c^2 V_c}{b_c}
    \right)}_{\sigma^2(\{b_c\})}
    \cdot
    \underbrace{\left(\sum_{c \in \mathcal{P}} 
    b_c\right)}_{B},
\end{equation*}
with equality if and only if 
$N_c\sqrt{V_c}/\sqrt{b_c} \propto \sqrt{b_c}$, 
i.e., $b_c \propto N_c\sqrt{V_c}$.
Combining with $\sum_c b_c = B$ yields 
Eq.~\eqref{eq:optimal-alloc}.

Substituting $b_c^* = B \cdot N_c\sqrt{V_c} / 
\sum_{c'} N_{c'}\sqrt{V_{c'}}$ into 
Eq.~\eqref{eq:total-var}, the minimum total 
variance achieved is
\begin{equation}
    \sigma^2_* = \frac{1}{B}
    \left(\sum_{c \in \mathcal{P}} 
    N_c\sqrt{V_c}\right)^2.
    \label{eq:min-var}
\end{equation}
\end{proof}

\subsection{Convergence Derivation for Phase Allocation}
\label{app:allocation}

We derive the convergence-rate comparison between 
the optimal allocation of 
Theorem~\ref{thm:main} and uniform 
allocation. Under standard SGD assumptions 
(L-smooth objective, bounded per-phase gradient 
variance $V_c$), the number of steps to reach 
an $\varepsilon$-stationary point 
satisfies~\citep{bottou2018optimization}
\begin{equation}
    T(\varepsilon) = \mathcal{O}\!\left(
    \frac{\sigma^2}{\varepsilon^2}
    \right),
    \label{eq:sgd-rate}
\end{equation}
where $\sigma^2$ is the variance of the gradient 
estimator at sample budget $B$; the $1/B$ variance reduction from $B$-sample averaging is incorporated into $\sigma^2$ via the per-phase variance formulas.

\paragraph{Optimal allocation.}
From Theorem~\ref{thm:main} and 
Eq.~\eqref{eq:min-var}, the minimum achievable 
estimator variance under budget $B$ is
\begin{equation}
    \sigma^2_* = \frac{1}{B}
    \left(\sum_{c \in \mathcal{P}} 
    N_c\sqrt{V_c}\right)^2.
    \label{eq:sigma-opt}
\end{equation}
Substituting into Eq.~\eqref{eq:sgd-rate}:
\begin{equation}
    T^*(\varepsilon) = \mathcal{O}\!\left(
    \frac{\left(\sum_{c \in \mathcal{P}} 
        N_c\sqrt{V_c}\right)^2}
    {B \cdot \varepsilon^2}
    \right).
    \label{eq:T-opt}
\end{equation}

\paragraph{Uniform allocation.}
Under $b_c = B/K$ for all $K$ phases, 
the total estimator variance is
\begin{equation}
    \sigma^2_{\mathrm{uniform}} 
    = \sum_{c \in \mathcal{P}} 
    \frac{N_c^2 V_c}{B/K}
    = \frac{K}{B} \sum_{c \in \mathcal{P}} 
    N_c^2 V_c.
    \label{eq:sigma-unif}
\end{equation}
Substituting into Eq.~\eqref{eq:sgd-rate}:
\begin{equation}
    T_{\mathrm{uniform}}(\varepsilon) 
    = \mathcal{O}\!\left(
    \frac{K \sum_{c \in \mathcal{P}} N_c^2 V_c}
    {B \cdot \varepsilon^2}
    \right).
    \label{eq:T-unif}
\end{equation}

\paragraph{Speedup ratio.}
Dividing Eq.~\eqref{eq:T-unif} by 
Eq.~\eqref{eq:T-opt}:
\begin{equation}
    \Delta 
    = \frac{T_{\mathrm{uniform}}(\varepsilon)}
           {T^*(\varepsilon)}
    = \frac{K \displaystyle
      \sum_{c \in \mathcal{P}} N_c^2 V_c}
    {\left(\displaystyle\sum_{c \in \mathcal{P}} 
      N_c\sqrt{V_c}\right)^2}.
    \label{eq:speedup}
\end{equation}
By the Cauchy-Schwarz inequality,
\begin{equation*}
    \left(\sum_{c \in \mathcal{P}} 
    N_c\sqrt{V_c}\right)^2 
    \leq K \sum_{c \in \mathcal{P}} N_c^2 V_c,
\end{equation*}
so $\Delta \geq 1$, with equality if and only 
if all $N_c^2 V_c$ are equal across phases. 
When variance concentrates on few phases such 
that $\max_c N_c^2 V_c \gg \frac{1}{K}
\sum_c N_c^2 V_c$, then $\Delta \gg 1$: the 
optimal allocation yields substantially fewer 
convergence steps than uniform allocation at 
the same chunk budget $B$.

\subsection{Bias Analysis of the Masked Estimator}
\label{app:bias}

The masked objective in Eq.~(11) omits the $N_c/b_c$ rescaling of the unbiased estimator (Eq.~14), trading a small bias for lower variance. We bound this bias.

\paragraph{Lemma 3 (Bias of the masked estimator).}
\emph{Let $p_c$ denote the keep probability for phase $c$. The bias of the masked estimator relative to the unbiased estimator satisfies}
\[
\|\mathrm{bias}\| = \Big\| \sum_{c \in \mathcal{P}} (1 - p_c)\, g_c \Big\| \;\leq\; \sum_{c \in \mathcal{P}} (1 - p_c)\, \|g_c\|.
\]

\textit{Proof.} Under weighted sampling without replacement with inclusion probability $p_c$, the expected contribution of phase $c$ to the masked estimator is $p_c \cdot g_c$, while the unbiased estimator yields $g_c$. Summing across phases and applying the triangle inequality gives the bound. \qed

Under the allocation $p_c \propto \sqrt{V_c}$, $(1-p_c)$ is large only when $V_c$ is small, and by Lemma~1, small $V_c$ implies $\|g_c\| \to 0$. Each term in the bound is therefore suppressed by at least one small factor, so Theorem~1's allocation remains approximately optimal under the biased estimator.

\section{Methods We Tried That Did Not Work}

The motivating observation behind this paper was that VLA rollouts on a given task are remarkably similar across trajectories. Most timesteps execute nearly identical motion patterns regardless of whether the trajectory eventually succeeds. A reaching phase looks like a reaching phase across all rollouts; the divergence concentrates on a small number of decision points. This led us to hypothesize that gradient compute was being misallocated: the same per-step budget was being spent on phases that contributed little to the policy update as on phases that determined the outcome. Before arriving at PCM, we explored two alternative directions for exploiting this observation. Both failed, and the failures shaped the eventual design.

\subsection{Branching at Decision Critical Timesteps}

Our initial approach attempted to increase the rollout signal at high-uncertainty points by branching: at each candidate decision point, fork the rollout into K alternative continuations to give GRPO more contrastive signal in the regions that mattered. Two problems made this infeasible. First, each branch required actually executing a downstream rollout in the simulator to be useful for GRPO, which meant returning to the branch point and continuing the simulation; this added significant rollout-side overhead and partially undid the efficiency motivation. Second, naive branching is exponential in depth, refining beyond the first branch point requires nested branching, and the rollout count grows as $K^d$ where d is the number of branches per trajectory. We found no principled way to bound depth without re-introducing the same uniform-allocation problem we were trying to escape.

\begin{figure}[!h]
    \centering
    \includegraphics[width=1\linewidth]{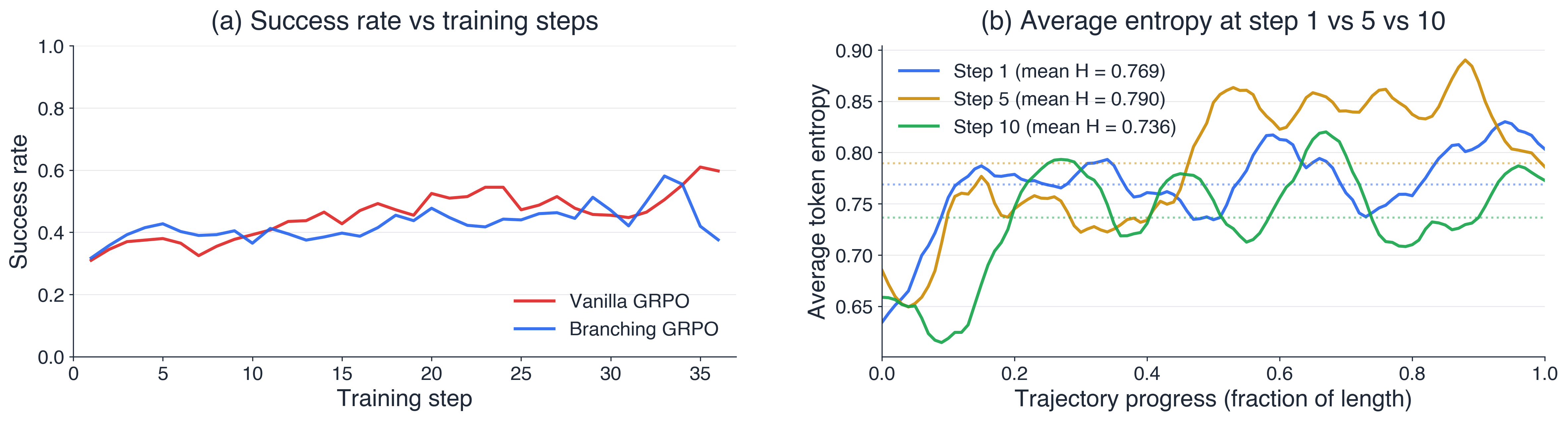}
    \caption{(a) Success vs. training steps for vanilla GRPO vs branching GRPO. (b) Action entropy over trajectory progress at different stages of training, showing weak alignment with outcome-critical phases. The training and evaluation follow the same procedures as Sec.~\ref{sec:results}}.
    \label{fig:placeholder}
\end{figure}

The lesson from this attempt was that any practical method had to operate within the existing rollout structure rather than expanding it. PCM instead reduces gradient computation over rollouts that have already been generated.

\subsection{Entropy as a Signal to Concentrate Learning}

Our second attempt used per-chunk policy entropy as a proxy for "where the model has room to learn." The hypothesis was the standard one: high-entropy chunks indicate uncertainty, uncertainty indicates a learning opportunity, and concentrating gradient on these chunks should yield faster convergence. We expected that as training progressed, entropy would decrease in initially-uncertain chunks and the gradient allocation would naturally shift toward the remaining noisy regions.

Two empirical findings led us to abandon this approach. First, entropy was poorly structured across trajectories: we did not observe consistent concentration on decision-critical phases the way we eventually observed for $C_c$ (Figure 1). Per-chunk entropy varied substantially across rollouts even within the same task and phase, and the high-entropy chunks did not align with the points where successful and failed rollouts actually diverged. We attribute this to the fact that VLA action distributions are typically already low-entropy after SFT. The supervised pretraining drives the policy toward confident, low-entropy actions, and the residual entropy reflects modeling noise rather than genuine multimodal uncertainty over correct behavior.
Second, and more decisively, entropy did not meaningfully decrease over training. Across multiple runs on LIBERO, we measured per-chunk entropy at the start and end of GRPO fine-tuning and found negligible reduction. This is consistent with the known property that GRPO is a relatively weak signal for changing the underlying action distribution: the group-relative advantage primarily reweights existing modes rather than collapsing the policy onto a sharper one. Entropy under SFT-initialized policies is therefore close to a fixed property of the model rather than a quantity RL can drive down. Selecting on entropy would mean repeatedly allocating gradient to the same chunks throughout training regardless of whether they had become well-learned, and would not adapt to the shifting locus of the success–failure gap as training progressed.

\section{Analysis of Chunk Budget}
\label{sec:chunk_budget}

\begin{figure}
    \centering
    \includegraphics[width=0.6\linewidth]{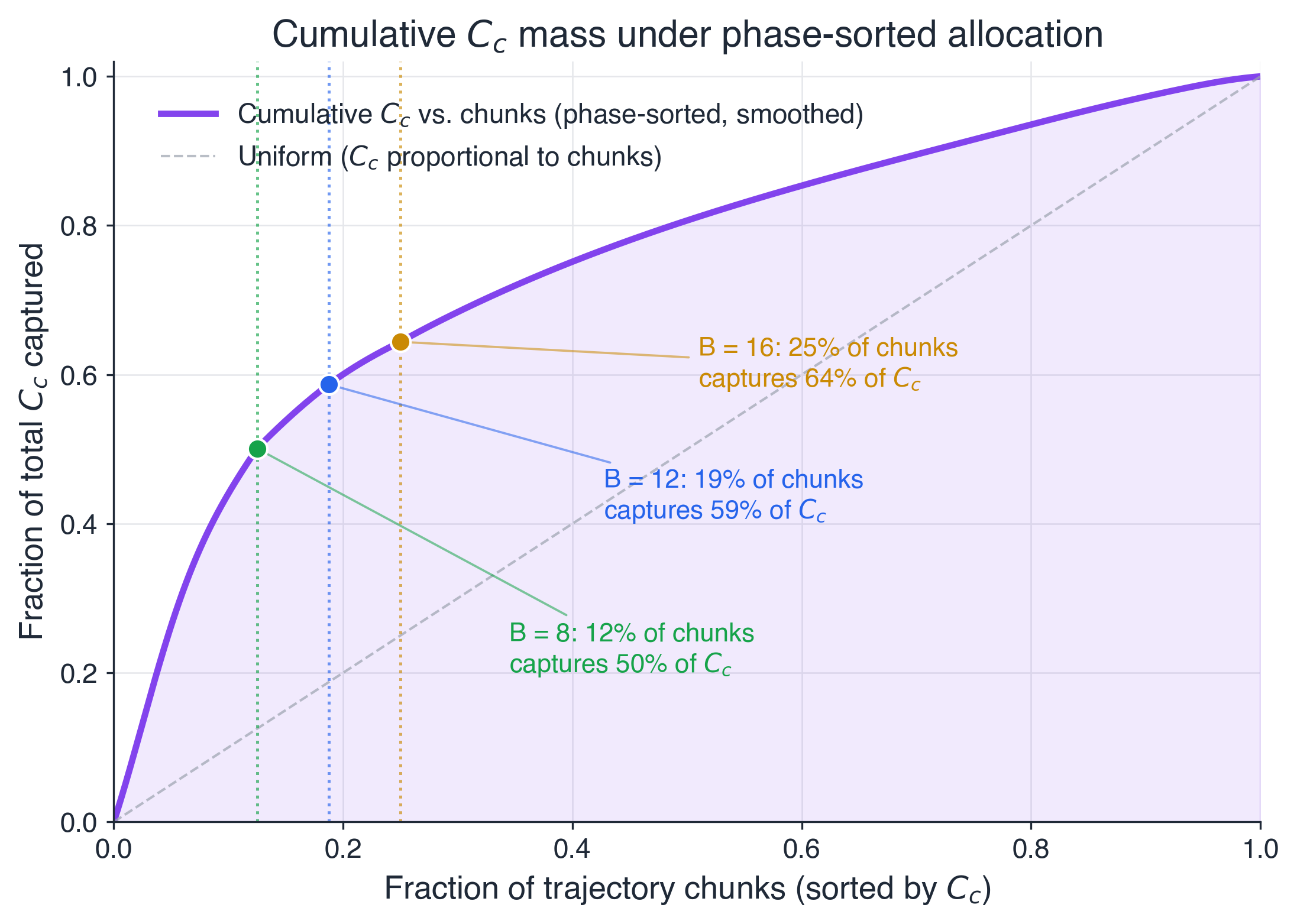}
    \caption{
    Cumulative $C_c$ captured as a function of the fraction of trajectory chunks retained, with chunks sorted in descending order by phase-level $C_c$. The solid curve shows the empirical cumulative $C_c$, while the dashed diagonal shows the uniform baseline under which $C_c$ would scale linearly with chunk count. The gap between the two reflects the uneven distribution of learning signal across the trajectory: a small fraction of chunks captures a disproportionate share of the total $C_c$.}
    \label{fig:chunk_budget}
\end{figure}

\paragraph{The compute-signal trade-off.}

Choosing $B$ reflects a trade-off between two opposing pressures. Lowering $B$ reduces per-step gradient computation linearly: fewer chunks in the backward pass reduce activation memory and accelerate actor updates, which is the source of PCM's wall-clock advantage. Increasing $B$ captures more of the available learning signal: a larger budget includes more chunks from outcome-divergent phases and approaches the gradient quality of full-trajectory GRPO.

The cumulative $C_c$ curve in Fig.~\ref{fig:chunk_budget} is sharply concave: early chunks, corresponding to high-$C_c$ phases, deliver disproportionate signal, while additional chunks beyond the knee contribute progressively less. The \emph{knee} of this curve, the point at which marginal $C_c$ capture per chunk drops sharply, occurs at approximately $20\%$ of trajectory chunks. This concavity makes the trade-off non-trivial. Below the knee, each additional chunk yields substantial signal at fixed compute cost, so excluding it sacrifices accuracy for marginal speedup. Above the knee, each additional chunk yields diminishing signal at the same cost, so including it sacrifices speedup for marginal accuracy. The knee marks the point where these pressures balance: the smallest $B$ for which captured $C_c$ has largely saturated and further increases primarily incur additional compute cost.

\paragraph{Budget range selection.}

The cumulative $C_c$ curve in Fig.~\ref{fig:chunk_budget} is sharply concave, with a knee at approximately $20\%$ of trajectory chunks. We
compute captured $C_c$ by ranking chunks according to their phase score
and measuring the fraction of total chunk-level score mass retained by
the top $B$ chunks. Beyond the knee, each additional chunk yields progressively smaller marginal gains in captured $C_c$. We select $B \in \{8, 12, 16\}$ to bracket this knee, corresponding to $12\%$, $19\%$, and $25\%$ of trajectory chunks, respectively.

\begin{itemize}
    \item \textbf{$B = 8$} ($12\%$ of chunks, $50\%$ of $C_c$ captured): Below the knee. Tests whether aggressive truncation that falls short of the inflection leads to signal under-coverage and degraded final accuracy.
    
    \item \textbf{$B = 12$} ($19\%$ of chunks, $59\%$ of $C_c$ captured): At the knee. Tests the predicted operating point where $C_c$ capture is high and marginal returns begin to flatten.
    
    \item \textbf{$B = 16$} ($25\%$ of chunks, $64\%$ of $C_c$ captured): Beyond the knee. Tests whether additional budget yields meaningful gains. The extra $5$ percentage points of $C_c$ capture over $B = 12$ come at substantially higher per-step compute.
\end{itemize}

The empirical sweep in Sec.~\ref{sec:results} (RQ3) confirms this curvature: $B = 8$ underperforms in final accuracy, $B = 16$ matches $B = 12$ but at higher cost, and $B = 12$ emerges as the operating point consistent with the knee of the cumulative $C_c$ curve.

\paragraph{Choosing $B$ for other domains.}

The knee of the cumulative $C_c$ curve provides a principled approximation for $B$ in any RL setting with a defined phase partition. The reasoning generalizes directly from the compute--signal trade-off: $C_c$ is computed from rollouts that GRPO already produces, so the curve can be measured in any domain where outcomes are verifiable and rollouts decompose into phases. The structural assumptions of the method are therefore not specific to manipulation. The knee identifies the operating point at which the two pressures balance, independent of the absolute magnitude of $C_c$ or the number of phases, because it is determined by the curvature of the signal.

The procedure is the same across domains. For a target domain, compute $C_c$ per phase from a single rollout group, plot the cumulative $C_c$ curve as in Fig.~\ref{fig:chunk_budget}, and select $B$ at the knee, the smallest fraction of chunks at which captured $C_c$ has largely saturated. The phase partition itself is domain-specific, such as gripper-state heuristics for manipulation, semantic decomposition for LLM reasoning, or episode segmentation for dialogue. Once defined, the curve and its knee are computed identically. Domains with sharper curvature, where signal concentrates in a few phases, admit smaller $B$ and larger speedups. Flatter curves indicate more uniform signal and require larger $B$ to capture comparable $C_c$, yielding smaller speedups. Only the resulting value of $B$ varies; the selection rule remains unchanged.

\section{Implementation Details}
\label{app:setup}

This appendix provides additional experimental details omitted from the
main text, including benchmark construction, training hyperparameters,
\textsc{PCM}-specific settings, and evaluation protocol. Unless stated
otherwise, \textsc{PCM} and the full-trajectory GRPO baseline use the
same rollout, reward, advantage, optimizer, and evaluation configuration.

\subsection{Benchmarks}
\label{app:benchmarks}
\paragraph{Models and Benchmarks.}
We evaluate on three LIBERO suites to cover diverse
manipulation regimes and test generalization. LIBERO-Object tests
object-centric pick-and-place, LIBERO-Spatial tests spatial-relation following,
and LIBERO-Goal tests goal-conditioned manipulation. The manipulation tasks are specified by natural-language instructions.
The following examples illustrate the types of prompts used across the
benchmarks:

\begin{itemize}
    \item Object-centric manipulation: ``pick up the mug and place it in the bowl.''
    \item Spatial manipulation: ``place the object on the left side of the tray.''
    \item Goal-conditioned manipulation: ``move the object to the target location.''
\end{itemize}
The policy
receives the language instruction and RGB observation and predicts a sequence
of 7-DoF end-effector actions. These suites vary in object identity, spatial layout, goal specification, and
task structure, bringing diversity in manipulation regimes and causing
phase-level failure patterns to appear with different frequencies across
benchmarks. Each benchmark contains 10 tasks with 50 trials per task under different initial object configurations, yielding 500 evaluation trials per benchmark. Trajectories contain a total of 64 chunks. During RL fine-tuning, we use visual augmentations such as brightness changes and image jitter to improve
robustness.

\paragraph{Training.}
Our method is implemented based on the SimpleVLA-RL \texttt{verl}
pipeline~\citep{10.1145/3689031.3696075}. We initialize from the SimpleVLA-RL 1-trajectory SFT checkpoint, where
OpenVLA-OFT is supervised-fine-tuned with one demonstration per task. We
then perform RL fine-tuning with LoRA rank $r{=}32$ and $\alpha{=}32$ on
2 NVIDIA H100 GPUs.
We use a prompt batch size of 8 and sample 10
rollouts per prompt, giving 80 trajectories per training step. The
trajectory mini-batch size is 4, and the PPO mini-batch size is 2 with
micro-batch size 1. We optimize the actor with AdamW using a constant learning rate of
$1\times 10^{-5}$, $(\beta_1,\beta_2)=(0.9,0.999)$, weight decay $0$,
and gradient clipping at $2.0$. Following SimpleVLA-RL, we use one PPO
epoch per update, asymmetric clipping
$(\epsilon_{\mathrm{low}},\epsilon_{\mathrm{high}})=(0.2,0.4)$, entropy
coefficient $10^{-3}$, and disable the KL penalty. Rollouts are sampled
with temperature $1.6$, top-$p=1.0$, and top-$k=-1$ without truncation.

For \textsc{PCM}, we use a fixed chunk budget of $B{=}12$ chunks per
trajectory. Phase keep probabilities are initialized from the
success--failure phase scores $C_c$ computed from the rollout group, and
then recomputed every $T_{\mathrm{rc}}{=}5$ training steps from the
recent phase-score buffer. We clip keep probabilities below by
$p_{\min}{=}0.1$, ensuring that low-variance phases are not completely
discarded and preserving a small amount of exploration throughout
training. Non-selected chunks are physically removed before the actor
forward and backward pass. \textsc{PCM} is simple to integrate into existing GRPO pipelines: it only adds phase-score computation from rollout
statistics and a chunk-shrinking step before the actor update. We use the
same \textsc{PCM} hyperparameters across all benchmarks without
task-specific tuning.



\subsection{Phase Allotment Rules}
\label{app:prompts}
We assign trajectory chunks to semantic phases using a deterministic
gripper-based heuristic. For each chunk $j$, we compute $g_f[j]\in[0,1]$,
the fraction of timesteps in the chunk where the gripper-close command is
active. We use a sustained-close threshold $\tau=0.75$ to locate grasp
intervals, and then assign phase labels around those intervals.

\begin{itemize}
\item \textbf{Active-grip.} Chunks with substantial gripper closure
    are labeled \textit{active-grip}. We use a softer threshold
    $g_f[j]\geq 0.5$ for labeling, so this phase includes both sustained
    grasp chunks and nearby partial-close chunks. In successful rollouts,
    this covers grasp and transport while the object is held, or recovery after drop; in failed
    rollouts, it can also capture drops, re-grasp attempts, or repeated
    failed closure near the object.

    \item \textbf{Pre-grasp.} Up to three chunks immediately before a
    sustained-close interval are labeled \textit{pre-grasp} when the
    gripper has begun closing but has not yet reached sustained closure
    ($0.1 \leq g_f[j] < 0.5$). This phase often captures final alignment and
    orientation adjustment before contact.

    \item \textbf{Release-ramp.} Up to three chunks immediately after a
    sustained-close interval are labeled \textit{release-ramp} ($g_f[j]<0.5$), capturing
    the transition from holding the object to releasing it. This phase is
    important when failures occur because the object is released at the
    wrong location or does not fall into the target bin.

    \item \textbf{Approach.} Chunks before the pre-grasp/active-grip
    region of a grasp cycle are labeled \textit{approach} ($g_f[j]<0.5$), corresponding
    to navigation toward the object.

    \item \textbf{Tail} Chunks after the final release
    with an open gripper are labeled \textit{tail} ($g_f[j]<0.1$). These often capture
    post-release behavior, including cases where the object was not
    successfully placed and the policy continues hovering or moving after
    release or attempts.
\end{itemize}

The unified phase set is resolved by priority:
\[
\textit{active-grip} \;>\; \textit{pre-grasp} \;>\;
\textit{release-ramp} \;>\; \textit{approach} \;>\;
\textit{tail}
\]
When multiple phase rules apply due to overlapping windows, we assign the
highest-priority label. Phases are distinguished first by their
position relative to sustained-close intervals and then by the gripper
thresholds above: open-gripper chunks before a future grasp attempt are
\textit{approach}, whereas open-gripper chunks after the final
release-ramp window (after a sustained grasp interval) are \textit{tail}. Since labels are assigned at the chunk level, averaging the gripper command
over the chunk reduces sensitivity to single-timestep noise and makes the
thresholding rule more stable. 

The same rule is applied to successful and failed trajectories. The
terminal success label is never used by the phase classifier, so $C_c$
compares success and failure trajectories under a shared phase partition.
The labeling rule is lightweight, requires no learned phase model, and adds no
inference overhead. Other phase allocators, such as task-specific heuristics, temporal
semantic phase segmenters, or LLM-based labelers, could be used when
gripper state is insufficient.
\begin{figure}
    \centering
    \includegraphics[width=0.9\linewidth]{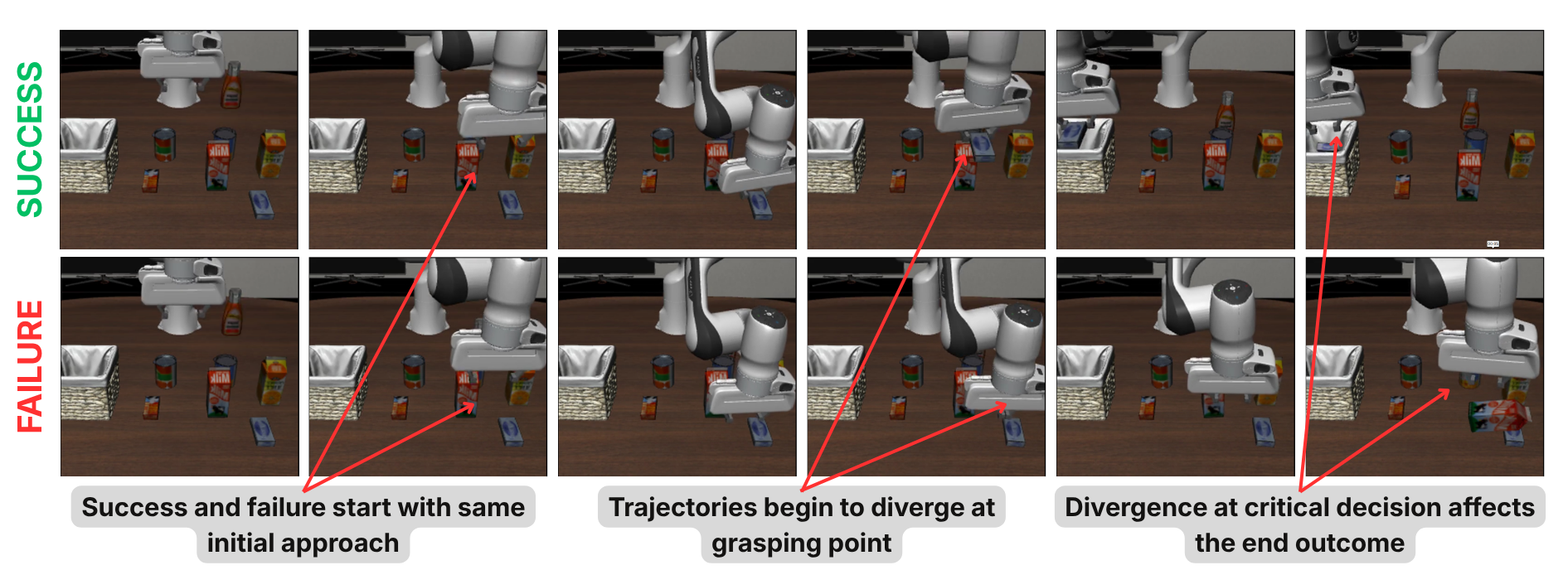}
    \caption{Successful and failed manipulation rollouts look similar for most
    of the trajectory, diverging mainly around outcome-critical grasp phases.}
    \label{fig:trajectories}
\end{figure}

\paragraph{Phase-based failure localization:}
Figure~\ref{fig:trajectories} illustrates how common failures fall
naturally into this phase partition. In a missed-grasp failure, the robot
may approach the object similarly to successful rollouts but close with
poor alignment. The closing chunks are labeled \textit{active-grip}, while
the immediately preceding orientation-adjustment chunks are labeled
\textit{pre-grasp}. This explains why both pre-grasp and active-grip can
receive high $C_c$: the failure may be caused not only by the closure
itself, but also by the pose and alignment right before contact. After such failures, however, many
remaining chunks are downstream consequence states (hovering, searching,
colliding, or moving without the object) which are typically labeled
\textit{tail}. These chunks can occupy a large
fraction of failed rollouts, but they are less causally informative than
the contact error that produced them. PCM therefore keeps them with low
nonzero probability while concentrating budget on the grasp phases that
prevent the policy from entering these failed states.

Other failures are captured by the same heuristic without additional case
rules. If the object is grasped but later dropped or repeatedly re-grasped
during transport, the corresponding close/hold chunks remain
\textit{active-grip}, making this phase sensitive to transport instability.
These chunks provide useful RL signal by penalizing slips, failed recovery and unstable holds
while rewarding successful recovery. If the grasp succeeds but placement fails, the error appears
around \textit{release-ramp}: the robot releases at the
wrong location or the object misses the bin, making its $C_c$ moderately high. Subsequent \textit{tail} chunks mostly capture open-gripper hovering after
release, and are therefore less outcome-discriminative and provide little learning signal. When computing $C_c$, we compare only phases that
contain both successful and failed samples in the rollout group; phases
absent from one outcome group in a batch are skipped for that update. In the observed rollout groups, failures often diverge during active-grip,
making it the most outcome-discriminative phase and causing $C_c$ to become
largest there. This concentration emerges automatically from the gripper-based
phase labels and rollout outcomes, without hand-designed phase weights or task-specific tuning.

\end{document}